\newif\iflongversion%
\crefname{algorithm}{Alg.}{Algs.}
\Crefname{algorithm}{Algorithm}{Algorithms}
\crefname{appendix}{App.}{App.}
\Crefname{appendix}{Appendix}{Appendices}
\crefname{corollary}{Corol.}{Corolls.}
\Crefname{corollary}{Corollary}{Corollaries}
\crefname{conjecture}{Conjecture}{Conjectures}
\Crefname{conjecture}{Conjecture}{Conjectures}
\crefname{definition}{Def.}{Defs.}
\Crefname{definition}{Definition}{Definition}
\crefname{figure}{Fig.}{Figs.}
\Crefname{figure}{Figure}{Figures}
\crefname{lemma}{Lemma}{Lemmas}
\Crefname{lemma}{Lemma}{Lemmas}
\crefname{proposition}{Prop.}{Props.}
\Crefname{proposition}{Proposition}{Propositions}
\Crefname{section}{Section}{Sections}
\crefname{section}{Sect.}{Sect.}
\crefname{subsection}{Sect.}{Sect.}
\Crefname{subsection}{Section}{Sections}
\crefname{subsubsection}{Sect.}{Sect.}
\Crefname{subsubsection}{Section}{Sections}
\crefname{table}{Table}{Tables}
\Crefname{table}{Table}{Tables}
\crefname{theorem}{Thm.}{Thms.}
\Crefname{theorem}{Theorem}{Theorems}
\DeclareMathOperator*{\E}{\mathbb{E}}
\DeclareMathOperator*{\EE}{\hat{\E}}
\newcommand{\algo}{\textup{\textsc{\algname}}}
\newcommand{\algoh}{\textup{\textsc{\algname-H}}}
\newcommand{\algotfp}{\textup{\textsc{TFP-R}}}
\newcommand{\algotfpamira}{\textup{\textsc{TFP-A}}}
\newcommand{\algname}{MCRapper}
\newcommand{\amira}{\textsc{Amira}}
\newcommand{\topkwy}{\textsc{TopKWY}}
\newcommand{\children}{\mathsf{children}}
\newcommand{\D}{\mathcal{D}}
\newcommand{\dbound}[2]{\Psi_{#2} ({#1})}
\newcommand{\ndbound}[2]{\widetilde{\Psi} ({#1})}
\newcommand{\desc}{\mathsf{d}}
\newcommand{\discr}[2]{\Delta_{#2}(#1)}
\newcommand{\F}{\mathcal{F}}
\newcommand{\frange}{c}
\newcommand{\freqset}{\mathcal{G}}
\newcommand{\iid}{i.i.d.}
\newcommand{\indp}{\vsigma^{+}_{j,i}}
\newcommand{\indm}{\vsigma^{-}_{j,i}}
\newcommand{\infreqset}{\mathcal{K}}
\newcommand{\Itms}{\mathcal{I}}
\newcommand{\lang}{\mathcal{L}}
\newcommand{\minimals}{\mathsf{minimals}}
\newcommand{\negborder}{\textsf{B}^-}
\newcommand{\pattern}{P}
\newcommand{\probdist}{\mu}
\newcommand{\R}{\mathbb{R}}
\newcommand{\rade}{\mathsf{R}}
\newcommand{\erade}{\hat{\rade}}
\newcommand{\trade}{\tilde{\rade}}
\newcommand{\era}{\erade\left(\F, \sample\right)}
\newcommand{\rmax}{z}
\newcommand{\sample}{\mathcal{S}}
\newcommand{\sd}{\mathsf{D}}
\newcommand{\supdev}{\sd(\F, \sample, \probdist)}
\newcommand{\tempset}{\mathcal{C}}
\newcommand{\tfp}{\mathsf{TFP}}
\newcommand{\vsigma}{{\bm{\sigma}}}
\newcommand{\wrt}{w.r.t.}
\newcommand{\X}{\mathcal{X}}
\newcommand{\papertitle}{\titlefirstpart\\\titlesecondpart}
\newcommand{\shortpapertitle}{\titlefirstpart\ \titlesecondpart}
\newcommand{\titlefirstpart}{\algname: Monte-Carlo Rademacher Averages}
\newcommand{\titlesecondpart}{for Poset Families and Approximate Pattern Mining}
\newcommand{\repolink}{\url{https://github.com/VandinLab/MCRapper}}
\begin{document}
\iflongversion
\else
\fancyhead{} 
\fi

\title[\shortpapertitle]{\texorpdfstring{\papertitle}{\titlefirstpart\
\titlesecondpart}}

\author{Leonardo Pellegrina}
\affiliation{%
  \department{Dept.~of Information Engineering}
  \institution{Universit\`a di Padova}
  \streetaddress{Via G.~Gradenigo 6/B}
  \city{Padova}
  \postcode{IT-35131}
  \country{Italy}
}
\email{pellegri@dei.unipd.it}

\author{Cyrus Cousins}
\affiliation{%
  \department{Dept.~of Computer Science}
  \institution{Brown University}
  \streetaddress{115 Waterman St.}
  \city{Providence}
  \state{RI}
  \postcode{02912}
  \country{USA}
}
\email{ccousins@cs.brown.edu}

\author{Fabio Vandin}
\affiliation{%
  \department{Dept.~of Information Engineering}
  \institution{Universit\`a di Padova}
  \streetaddress{Via G.~Gradenigo 6/B}
  \city{Padova}
  \postcode{IT-35131}
  \country{Italy}
}
\email{fabio.vandin@unipd.it}

\author{Matteo Riondato}
\orcid{0000-0003-2523-4420}
\affiliation{%
  \department{Dept.~of Computer Science}
  \institution{Amherst College}
  \streetaddress{AC \#2232 Amherst College}
  \city{Amherst}
  \state{MA}
  \postcode{01002}
  \country{USA}
}
\email{mriondato@amherst.edu}

\begin{teaserfigure}
  \noindent\textit{``I'm an MC still as honest''} -- Eminem, \textit{Rap God}
  \Description{This is not a figure, it is just a quote before the abstract. The
    quote is: ``I'm an MC still as honest'', and it is taken from the song Rap
  God by Eminem.}
\end{teaserfigure}

\begin{abstract}
We present \algo, an algorithm for efficient computation of Monte-Carlo
Empirical Rademacher Averages (MCERA) for families of functions exhibiting poset (e.g.,
lattice) structure, such as those that arise in many pattern mining tasks. The
MCERA allows us to compute upper bounds to the maximum deviation of sample means
from their expectations, thus it can be used to find both
statistically-significant functions (i.e., patterns) when the available data is seen as
a sample from an unknown distribution, and approximations of collections of
high-expectation functions (e.g., frequent patterns) when the available data is
a small sample from a large dataset. This feature is a strong improvement over
previously proposed solutions that could only achieve one of the two. \algo\
uses upper bounds to the discrepancy of the functions to efficiently explore and
prune the search space, a technique borrowed from pattern mining itself. To show
the practical use of \algo, we employ it to develop an algorithm \algotfp\ for the task of
True Frequent Pattern (TFP) mining. \algotfp\ gives guarantees on the
probability of including any false positives (precision) and exhibits higher statistical
power (recall) than existing methods offering the same guarantees. We
evaluate \algo\ and \algotfp\ and show that they outperform the state-of-the-art for
their respective tasks.

\end{abstract}

\begin{CCSXML}
<ccs2012>
<concept>
<concept_id>10002951.10003227.10003351</concept_id>
<concept_desc>Information systems~Data mining</concept_desc>
<concept_significance>500</concept_significance>
</concept>
<concept>
<concept_id>10002950.10003648.10003671</concept_id>
<concept_desc>Mathematics of computing~Probabilistic algorithms</concept_desc>
<concept_significance>500</concept_significance>
</concept>
<concept>
<concept_id>10003752.10003809.10010055.10010057</concept_id>
<concept_desc>Theory of computation~Sketching and sampling</concept_desc>
<concept_significance>500</concept_significance>
</concept>
</ccs2012>
\end{CCSXML}

\ccsdesc[500]{Information systems~Data mining}
\ccsdesc[500]{Mathematics of computing~Probabilistic algorithms}
\ccsdesc[500]{Theory of computation~Sketching and sampling}


\maketitle

\section{Introduction}\label{sec:intro}
Pattern mining is a key sub-area of knowledge discovery from data, with a large
number of variants (from itemsets mining~\citep{AgrawalIS93} to subgroup
discovery~\citep{Klosgen92}, to sequential patterns~\citep{AgrawalS95}, to
graphlets~\cite{AhmedNRD15}) tailored to applications ranging from market basket
analysis to spam detection to recommendation systems. Ingenuous algorithms 
have been proposed over the years, and pattern mining is both
extremely used in practice and a very vibrant area of research.

In this work we are interested in the analysis of \emph{samples} for pattern
mining. There are two meanings of ``sample'' in this context, but, as we now
argue, they are really two sides of the same coin, and our methods work for both
sides.

The first meaning is \emph{sample} as a \emph{small random sample of a large
dataset}: since mining patterns becomes more expensive as the dataset grows, it
is reasonable to mine only a small random sample that fits into the main memory
of the machine. Recently, this meaning of sample as
``sample-of-the-dataset'' has been used also to enable interactive data
exploration using progressive algorithms for pattern
mining~\citep{ServanSchreiberRZ18}. The patterns obtained from the sample are
an \emph{approximation} of the exact collection, due to the noise
introduced by the sampling process. To obtain desirable probabilistic guarantees
on the quality of the approximation, one must study the \emph{trade-off
between the size of the sample and the quality of the approximation}. Many works
have progressively obtained better characterizations of the trade-off using
advanced probabilistic
concepts~\citep{Toivonen96,ChakaravarthyPS09,RiondatoU14,RiondatoU15,RiondatoV18,ServanSchreiberRZ18}.
Recent methods~\citep{RiondatoU14,RiondatoU15,RiondatoV18,ServanSchreiberRZ18}
use VC-dimension, pseudodimension, and Rademacher
averages~\citep{BartlettM02,KoltchinskiiP00}, key concepts from statistical
learning theory~\citep{Vapnik98} (see also \cref{sec:relwork,sec:prelims:rade}),
because they allow to obtain uniform (i.e., simultaneous) probabilistic
guarantees on the deviations of all sample means (e.g., sample
frequencies, or other measure of interestingness, of all patterns) from their
expectations (the exact interestingness of the patterns in the dataset).

The second meaning is \emph{sample} as a \emph{sample from an unknown data
generating distribution}: the whole dataset is seen as a collection of samples
from an unknown distribution, and the goal of mining patterns from the available
dataset is to gain approximate information (or better, discover knowledge) about
the distribution. This area is known as \emph{statistically-sound pattern
discovery}~\citep{HamalainenW18}, and there are many different flavors of it,
from significant pattern mining~\citep{TeradaOHTS13} from transactional
datasets~\citep{PellegrinaRV19a,kirsch2012efficient},
sequences~\citep{tonon2019permutation}, or graphs~\citep{SugiyamaLLKB15}, to true
frequent itemset mining~\citep{riondato2014finding}, to, at least in part,
contrast pattern mining~\citep{BayP01}. Many works in this area also use
concepts from statistical learning theory such as empirical
VC-dimension~\citep{riondato2014finding} or Rademacher
averages~\citep{PellegrinaRV19a}, because, once again, these concepts allow to
get very sharp bounds on the maximum difference between the observed
interestingness on the sample and the unknown interestingness according to the
distribution.

The two meanings of ``sample'' are really two sides of the same
coin, because also in the first case the goal is to approximate an unknown
distribution from a sample, thus falling back into the second case. Despite this
similarity, previous contributions have been extremely point-of-view-specific
and pattern-specific. In part, these limitations are due to the techniques used
to study the trade-off between sample size and quality of the approximation
obtained from the sample. Our work instead proposes a \emph{unifying solution}
for mining approximate collections of patterns from samples, while giving
guarantees on the quality of the approximation: our proposed method can easily
be adapted to approximate collections of frequent itemsets, frequent sequences,
true frequent patterns, significant patterns, and many other tasks, even outside
of pattern mining.

At the core of our approach is the \emph{$n$-Samples Monte-Carlo (Empirical)
Rademacher Average ($n$-MCERA)}~\citep{BartlettM02} (see~\eqref{eq:mcera}),
which has the flexibility and the power needed to achieve our goals, as it gives
much sharper bounds to the deviation than other approaches. The
challenge in using the $n$-MCERA, like other quantities from statistical
learning theory, is how to compute it efficiently.

\paragraph{Contributions} We present \algo, an algorithm for the fast
computation of the $n$-MCERA of families of functions with a poset structure,
which often arise in pattern mining tasks (\cref{sec:prelims:patterns}).
\begin{itemize}[leftmargin=10pt]
  \item \algo\ is the first algorithm to compute
    the $n$-MCERA efficiently. It achieves this goal by using
    sharp upper bounds to the discrepancy of each function in the family
    (\cref{sec:bounds}) to quickly prune large parts of the function search
    space during the exploration necessary to compute the $n$-MCERA, in a
    branch-and-bound fashion. We also develop a novel sharper upper bound to the
    supremum deviation using the 1-MCERA (\cref{thm:supdev1mcera}). It holds
    for any family of functions, and is of independent interest.
  \item To showcase the practical strength of \algo, we develop \algotfp\
    (\cref{sec:appl}), a novel algorithm for the extraction of the True Frequent
    Patterns (TFP)~\citep{riondato2014finding}. \algotfp\ gives probabilistic
    guarantees on the quality of its output: with probability at least
    $1-\delta$ (over the choice of the sample and the randomness used in
    the algorithm), for user-supplied $\delta\in(0,1)$, the
    output is guaranteed to not contain any false positives. That is, \algotfp\
    controls the Family-Wise Error Rate (FWER) at level $\delta$ while
    achieving high statistical power, thanks to the use of the $n$-MCERA and
    of novel \emph{variance-aware} tail bounds (\cref{thm:supdevvar}). We also
    discuss other applications of \algo, to remark on its flexibility as a
    general-purpose algorithm.
  \item We conduct an extensive experimental evaluation of \algo\ and \algotfp\ on real
    datasets (\cref{sec:experiments}), and compare their performance with that of
    state-of-the-art algorithms for their respective
    tasks. \algo, thanks to the $n$-MCERA, computes much sharper
    (i.e, lower) upper bounds to the supremum deviation than algorithms using
    the looser Massart's lemma~\citep[Lemma
    26.8]{ShalevSBD14}. \algotfp\ extracts many more TFPs (i.e., has higher
    statistical power) than existing algorithms with the same guarantees.
\end{itemize}

\section{Related Work}\label{sec:relwork}
Our work applies to both the ``small-random-sample-from-large-dataset'' and the
``dataset-as-a-sample'' settings, so we now discuss the relationship of our work
to prior art in both settings. We do not study the important but different
task of output sampling in pattern mining~\citep{BoleyLPG11,DzyubaVLDR17}. We
focus on works that use concepts from statistical learning theory: these are the
most related to our work, and most often the state of the art in their areas.
More details are available in surveys~\citep{RiondatoU14,HamalainenW18}.

The idea of mining a small random sample of a large dataset to speed up the
pattern extraction step was proposed for the case of itemsets by
\citet{Toivonen96} shortly after the first algorithm for the task had been
introduced. The trade-off between the sample size and the quality of the
approximation obtained from the sample has been progressively better
characterized~\citep{ChakaravarthyPS09,RiondatoU14,RiondatoU15}, with large
improvements due to the use of concepts from statistical learning theory.
\citet{RiondatoU14} study the VC-dimension of the itemsets mining task, which
results in a worst-case \emph{dataset-dependent} but \emph{sample- and
distribution-agnostic characterization} of the trade-off. The major advantage of
using Rademacher averages~\citep{KoltchinskiiP00}, as we do in \algo\, is that
the characterization is now \emph{sample-and-distribution-dependent}, which
gives much better upper bounds to the maximum deviation of sample means from
their expectations. Rademacher averages were also used by \citet{RiondatoU15}, but
they used worst-case upper bounds (based on Massart's lemma~\citep[Lemma
26.2]{ShalevSBD14}) to the empirical Rademacher average of the task, resulting
in excessively large bounds. \algo\ instead computes the \emph{exact} $n$-MCERA
of the family of interest on the observed sample, without having to consider the
worst case. For other kinds of patterns, \citet{RiondatoV18} studied the
pseudodimension of subgroups, while \citet{ServanSchreiberRZ18} and \citet{SantoroTV20} considered the
(empirical) VC-dimension and Rademacher averages for sequential patterns.
\algo\ can be applied in all
these cases, and obtains better bounds because it uses the
\emph{sample-and-distribution-dependent} $n$-MCERA, rather than a worst case
dataset-dependent bound.

Significant pattern mining considers the dataset as a
sample from an unknown distribution. Many variants and algorithms are described
in the survey by \citet{HamalainenW18}. We discuss only the two most related to
our work. \citet{riondato2014finding} introduce the problem of finding the true
frequent itemsets, i.e., the itemsets that are frequent \wrt\ the unknown
distribution.  They propose a method based on empirical VC-dimension to compute
the frequency threshold to use to obtain a collection of true frequent patterns
with no false positives (see also \cref{sec:appl}). Our algorithm \algotfp\ uses
the $n$-MCERA, and as we show in \cref{sec:experiments}, it greatly outperforms
the state-of-the-art (a modified version of the algorithm by \citet{RiondatoU15}
for approximate frequent itemsets mining). \citet{PellegrinaRV19a} use
empirical Rademacher averages in their work for significant pattern mining. As
their work uses the bound by \citet{RiondatoU15}, the same comments about
the $n$-MCERA being a superior approach hold.

Our approach to bounding the supremum deviation by
computing the $n$-MCERA with efficient search space exploration techniques is
novel, not just in knowledge discovery, as the $n$-MCERA has received scant
attention.  \citet{RadaboundDSU} use it to control the generalization error in a
sequential and adaptive setting, but do not discuss efficient computation. We
believe that the lack of attention to the $n$-MCERA can be be explained by the
fact that there were no efficient algorithms for it, a gap now filled by  \algo.

\section{Preliminaries}\label{sec:prelims}
We now define the most important concepts and results that we use throughout this
work. Let $\F$ be a class of real valued functions from a domain $\X$ to the
interval $[a , b] \subset \R$. We use $\frange$ to denote $|b-a|$ and $\rmax$ to
denote $\max\{|a|, |b|\}$. In this work, we focus on a specific class of
families (see \cref{sec:prelims:patterns}). In pattern mining from transactional
datasets, $\X$ is the set of all possible transactions (or, e.g., sequences).
Let $\probdist$ be an \emph{unknown} probability distribution over $\X$ and the
\emph{sample} $\sample = \{s_1, \dotsc, s_m\}$ be a bag of $m$ \iid\ random
samples from $\X$ drawn according to $\probdist$. We discussed in
\cref{sec:intro} how in the pattern mining case, the sample may either be the
whole dataset (sampled according to an unknown distribution) or a random sample
of a large dataset (more details in \cref{sec:prelims:patterns}).  For each $f
\in \F$, we define its \emph{empirical sample average (or sample mean)
$\EE_{\sample}[f]$ on $\sample$} and  its \emph{expectation} $\E[f]$
respectively as
\[
  \EE_\sample[f] \doteq \frac{1}{m} \sum_{s_{i} \in \sample} f(s_{i}) \
  \text{and}\ \E[f] \doteq \E_{\probdist} \left[ \frac{1}{m} \sum_{s_{i} \in \sample}
  f(s_{i}) \right] \enspace.
\]
In the pattern mining case, the sample mean is the observed interestingness of a
pattern, e.g., its frequency (but other measures of interestingness can be
modeled as above, as discussed for subgroups by~\citet{RiondatoV18}), while the
expectation is the unknown exact interestingness that we are interested in
approximating, that is, either in the large datasets or \wrt\ the unknown data
generating distribution. We are interested in developing tight and
fast-to-compute upper bounds to the \emph{supremum deviation (SD) $\supdev$ of $\F$ on $\sample$} between the empirical sample average and the expectation
\emph{simultaneously} for all $f \in \F$, defined as
\begin{equation}\label{eq:supdev}
  \supdev = \sup_{f \in \F} \left|  \EE_{\sample}[f] - \E_{\probdist}[f] \right|
  \enspace.
\end{equation}
The supremum deviation allows to quantify how good the estimates obtained from
the samples are. Because $\probdist$ is unknown, it is not possible to compute
$\supdev$ exactly. We introduce concepts such as Monte-Carlo Rademacher Average
and results to compute such bounds in \cref{sec:prelims:rade}, but first we
elaborate on the specific class of families that we are interested in.

\subsection{Poset families and patterns}\label{sec:prelims:patterns}
A partially-ordered set, or \emph{poset} is a pair $(A, \preceq)$ where $A$ is a
set and $\preceq$ is a binary relation between elements of $A$ that is
reflexive, anti-symmetric, and transitive. Examples of posets include the $A =
\mathbb{N}$ and the obvious ``less-than-or-equal-to'' ($\le$) relation, and the
powerset of a set of elements and the ``subset-or-equal'' ($\subseteq$)
relation. For any element $y \in A$, we call an element $w \in A$, $w \neq y$ a
\emph{descendant} of $y$ (and call $y$ an \emph{ancestor} of $w$) if $y \preceq
w$. Additionally, if $y \preceq w$ and there is no $q \in A$, $q \neq y$, $q
\neq w$ such that $y \preceq q \preceq w$, then we say that $w$ is a
\emph{child} of $y$ and that $y$ is a \emph{parent} of $w$. For example, the set
$\{0, 2\}$ is a parent of the set $\{0, 2, 5\}$ and an ancestor of the set $\{0,
1, 2, 7\}$, when considering $A$ to be all possible subsets of integers and the
$\subseteq$ relation.

In this work we are interested in posets where $A$ is a family $\F$ of functions
as in \cref{sec:prelims:rade}, and the relation $\preceq$ is the following: for
any $f, g \in \F$
\begin{equation}\label{eq:preceq}
  f \preceq g\ \text{iff}
  \begin{cases}
    f(x) \ge g(x) & \text{for every}\ x \in \X\ \text{s.t.}\ f(x) \ge 0\\
    f(x) \le g(x) & \text{for every}\ x \in \X\ \text{s.t.}\ f(x) < 0
  \end{cases}
  \enspace.
\end{equation}
The very general but a bit complicated requirement often collapses to much
simpler ones as we discuss below. We aim for generality, as our goal is to
develop a unifying approach for many pattern mining tasks, for both
meanings of ``sample'', as discussed in \cref{sec:intro}. For now, consider for
example that requiring $|f(x)| \ge |g(x)|$ for every $x \in \X$ is a
specialization of the above more general requirement. We assume to have access
to a blackbox function $\children$ that, given any function $f \in \F$, returns
the list of children of $f$ according to $\preceq$, and to a blackbox function
$\minimals$ that, given $\F$, returns the \emph{minimal elements \wrt\
$\preceq$}, i.e., all the functions $f \in \F$ without any parents. We refer to
families that satisfy these conditions as \emph{poset families}, even if the
conditions are more about the relation $\preceq$ than about the family. We now
discuss how poset families arise in many pattern mining tasks.

In pattern mining, it is assumed to have a language $\lang$ containing the
patterns of interest. For example, in itemsets mining~\citep{AgrawalIS93},
$\lang$ is the set of all possible \emph{itemsets}, i.e., all non-empty subsets
of an alphabet $\Itms$ of \emph{items}, while in sequential pattern
mining~\citep{AgrawalS95}, $\lang$ is the set of sequences, and in subgroup
discovery~\citep{Klosgen92}, $\lang$ is set by the user as the set of patterns of
interest. In all these cases, for each pattern $\pattern \in \lang$, it is
possible to define a function $f_\pattern$ from the domain $\X$, which is the
set of all possible \emph{transactions}, i.e., elementary components of the
dataset or of the sample, to an appropriate co-domain $[a, b]$, such that
$f_\pattern(x)$ denotes the ``value'' of the pattern $\pattern$ on the
transaction $x$. For example, for itemsets mining, $\X$ is all the subsets of
$\Itms$ and $f_\pattern$ maps $\X$ to $\{0, 1\}$ so that $f_\pattern(x) = 1$ iff
$\pattern \subseteq x$ and $0$ otherwise. A consequence of this definition is
that $\EE_\sample[f_\pattern]$ is the \emph{frequency} of $\pattern$ in
$\sample$, i.e., the fraction of transaction of $\sample$ that contain the
pattern $\pattern$. A more complex (due to the nature of the patterns) but
similar definition would hold for sequential patterns. For the case of
\emph{high-utility itemset mining}~\citep{FournierVCWLTCN19}, the value of
$f_\pattern(x)$ would be the utility of $\pattern$ in the transaction $x$. The
family $\F$ is the set of the functions $f_\pattern$ for every pattern $\pattern
\in \lang$. Similar reasoning also applies to patterns on graphs, such as
graphlets~\citep{AhmedNRD15}.

Now that we have defined the set that we are interested in, let's comment on the
relation $\preceq$ that, together with the set, forms the poset. In the itemsets
case, for any two patterns $\pattern'$ and $\pattern'' \in \lang$, i.e., for any
two functions $f = f_{\pattern'}$ and $g = f_{\pattern''} \in \F$, it holds $f
\preceq g$ iff $\pattern' \subseteq \pattern''$. For sequences, the
\emph{subsequence relation} $\sqsubseteq$ defines $\preceq$ instead. In all
pattern mining tasks, the only minimal element of $\F$ \wrt\ $\preceq$ is the
empty itemset (or sequence) $\emptyset$. Our assumption to have access to the
blackboxes $\children$ and $\minimals$ is therefore very reasonable, because
computing these collections is extremely straightforward in all the pattern
mining cases we just mentioned and many others.

\subsection{Rademacher Averages}\label{sec:prelims:rade}
Here we present Rademacher averages~\citep{KoltchinskiiP00,BartlettM02} and
related results at the core of statistical learning theory~\citep{Vapnik98}. Our
presentation uses the most recent and sharper results, and we also introduce new
results (\cref{thm:supdevvar}, and later \cref{thm:supdev1mcera}) that may be of independent
interest. For an introduction to statistical learning theory and more details about Rademacher
averages, we refer the interested reader to the textbook by \citet{ShalevSBD14}.
In this section we consider a generic family $\F$, not necessarily a poset
family.

A key quantity to study the supremum deviation (SD) from~\eqref{eq:supdev} is the
\emph{empirical Rademacher average (ERA) $\era$ of $\F$ on
$\sample$}~\citep{KoltchinskiiP00,BartlettM02}, defined as follows.  Let
$\vsigma  =\langle \sigma_1, \dotsc, \sigma_m \rangle$ be a collection of $m$
\iid\ Rademacher random variables, i.e., each taking
value in $\{-1, 1\}$ with equal probability. The ERA of $\F$ on $\sample$ is the
quantity
\begin{equation}\label{eq:rade}
 \era \doteq \E_\vsigma \left[ \sup_{f \in \F } \frac{1}{m}
 \sum_{i=1}^m \sigma_i f(s_i) \right] \enspace.
\end{equation}
Computing the ERA $\era$ exactly is often intractable, due to
the \emph{expectation} over $2^{m}$ possible assignments for $\vsigma$, and the
need to compute a \emph{supremum} for each of these assignments, which precludes
many standard techniques for computing expectations. Bounds to
the SD are then obtained through efficiently-computable \emph{upper bounds}
to the ERA\@. Massart's lemma~\citep[Lemma 26.2]{ShalevSBD14} gives a
\emph{deterministic} upper bound to the ERA that is often very loose.
Monte-Carlo estimation allows to obtain an often sharper \emph{probabilistic}
upper bound to the ERA\@. For $n \ge 1$, let $\vsigma \in {\{-1, 1\}}^{n \times
  m}$ be a $n \times m$ matrix of \iid\ Rademacher random variables. The
  \emph{$n$-Samples Monte-Carlo Empirical Rademacher Average ($n$-MCERA)
    $\erade^{n}_{m}(\F, \sample, \vsigma)$ of $\F$ on $\sample$ using $\vsigma$}
    is~\citep{BartlettM02}
\begin{equation}\label{eq:mcera}
  \erade^{n}_{m}(\F, \sample, \vsigma) \doteq \frac{1}{n}\sum_{j=1}^{n} \sup_{f
  \in \F} \frac{1}{m} \sum_{s_{i} \in \sample} \vsigma_{j,i}
  f(s_{i}) \enspace.
\end{equation}
The $n$-MCERA allows to obtain probabilistic upper bounds to the SD as follows
(proof in  %
\iflongversion%
\cref{sec:proofs}%
\else
the appendix of the extended online version at \refmissing%
\fi
). In \cref{sec:1mcera} we show a novel improved bound for the special case
$n=1$ (\cref{thm:supdev1mcera}).

\begin{theorem}\label{thm:supdevmcera}
  Let $\eta \in (0, 1)$.  For ease of notation let
  \begin{equation}\label{eq:trade}
    \trade \doteq \erade^{n}_{m} \left( \F, \sample, \vsigma
    \right) + 2 \rmax \sqrt{\frac{\ln \frac{4}{\eta}}{2nm}} \enspace.
  \end{equation}
  With probability at least $1 - \eta$ over the choice of $\sample$ and
  $\vsigma$, it holds
  \begin{equation}\label{eq:supdevmcera}
    \supdev \leq 2\trade + \frac{\sqrt{\frange(4m \trade + \frange\ln
      \frac{4}{\eta}) \ln \frac{4}{\eta}}}{m} + \frac{\frange \ln
    \frac{4}{\eta}}{m} + \frange\sqrt{\frac{\ln \frac{4}{\eta}}{2m}} \enspace.
  \end{equation}
\end{theorem}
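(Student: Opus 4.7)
The plan is to chain together, via a union bound totaling failure probability at most $\eta$, the classical symmetrization inequality with three probabilistic concentration arguments, so that the final statement upper-bounds $\supdev$ by a quantity depending only on the observable $\erade^n_m(\F, \sample, \vsigma)$. Concretely, I would (i) pass from $\supdev$ to $\E_\sample[\supdev]$ via a variance-aware concentration bound for suprema of empirical processes, (ii) pass from $\E_\sample[\supdev]$ to $2\rc = 2\E_\sample[\era]$ by symmetrization (deterministic), (iii) pass from $\E_\sample[\era]$ to $\era$ by bounded-differences concentration in $\sample$, and (iv) pass from $\era = \E_\vsigma\bigl[\erade^n_m(\F, \sample, \vsigma)\bigr]$ to the observable $\erade^n_m(\F, \sample, \vsigma)$ by bounded-differences concentration in the Rademacher matrix $\vsigma$. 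The uniform logarithmic factor $\ln(4/\eta)$ in every term of the theorem reflects splitting the failure budget among the probabilistic sub-steps.

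Symmetrization is the standard ghost-sample argument and is deterministic. The Monte-Carlo step (iv) is a one-sided McDiarmid applied to $\erade^n_m(\F, \sample, \vsigma)$ as a function of the $nm$ entries of $\vsigma$: each $\vsigma_{j,i}$ changes the value by at most $2\rmax/(nm)$ since $|f(s_i)|\leq\rmax$, which gives
\[
  \era = \E_\vsigma\bigl[\erade^n_m(\F, \sample, \vsigma)\bigr] \leq \erade^n_m(\F, \sample, \vsigma) + 2\rmax\sqrt{\ln(4/\eta)/(2nm)} = \trade
\]
with high probability over $\vsigma$. Step (iii) is McDiarmid applied to $\era$ as a function of $\sample$ (bounded differences $\frange/m$), yielding $\E_\sample[\era] \leq \era + \frange\sqrt{\ln(4/\eta)/(2m)}$ with high probability over $\sample$; this is exactly the trailing $\frange\sqrt{\ln(4/\eta)/(2m)}$ summand of the theorem.

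The delicate piece is step (i), which produces the Bernstein-type shape of the remaining two summands. I would invoke a variance-aware inequality of the Bousquet/Bennett/Klein--Rio type for suprema of empirical processes, exploiting the self-bounding property that the relevant variance is controlled by $\frange\,\E_\sample[\supdev]$. The resulting bound has the schematic form $\supdev \leq \E_\sample[\supdev] + \sqrt{2V\ln(4/\eta)/m} + \frange\ln(4/\eta)/m$, with a variance proxy $V$ proportional to $\frange\,\E_\sample[\supdev]$. Chaining with symmetrization and with the high-probability inequality $\E_\sample[\era] \leq \trade$ from steps (iii)--(iv) lets me substitute $\trade$ for $\E_\sample[\era]$ inside the variance proxy, giving the displayed $\sqrt{\frange(4m\trade + \frange\ln(4/\eta))\ln(4/\eta)}/m + \frange\ln(4/\eta)/m$ form. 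The main obstacle is this last substitution: choosing the right variant of the variance-aware inequality so that the constants and the combination $4m\trade + \frange\ln(4/\eta)$ under the square root come out exactly as stated, and propagating the high-probability replacement of the unobservable $\E_\sample[\era]$ by $\trade$ without inflating the overall failure probability beyond $\eta$.
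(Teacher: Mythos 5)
Your steps (ii) and (iv) do match the paper's proof: the one-sided McDiarmid over $\vsigma$ with constants $2\rmax/(nm)$ is exactly \cref{lem:mcera}, and the paper indeed finishes with a union bound, combining \cref{lem:mcera} (at confidence $\eta/4$) with the data-dependent bound of \cref{thm:supdev} (at $3\eta/4$) and using monotonicity in $\era$ to substitute $\trade$. However, your attribution of the two remaining summands is swapped, and the swap conceals a real gap. In~\eqref{eq:supdevmcera}, the trailing $\frange\sqrt{\ln(4/\eta)/(2m)}$ is the \emph{plain} bounded-differences concentration of $\supdev$ around $\E_\sample[\supdev]$, applied before symmetrization (hence not doubled), while the Bernstein-shaped terms $\frac{1}{m}\sqrt{\frange(4m\trade+\frange\ln\frac{4}{\eta})\ln\frac{4}{\eta}}+\frac{\frange\ln\frac{4}{\eta}}{m}$ come from the self-bounding concentration of the \emph{empirical Rademacher average} below its expectation $\rc$ (the Oneto et al.\ bound packaged in \cref{thm:supdev}), i.e., from replacing the unobservable $\rc$ by $\era$ — not from a variance-aware bound on the SD. Your step (iii), plain McDiarmid for $\era$ as a function of $\sample$, cannot generate that Bernstein shape, and after the symmetrization factor $2$ it would contribute $2\frange\sqrt{\ln(4/\eta)/(2m)}$, not the stated $\frange\sqrt{\ln(4/\eta)/(2m)}$, so your decomposition does not reproduce the claimed inequality.

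The genuine gap is step (i). Bousquet/Klein--Rio-type inequalities (the paper's \cref{thm:sdvarbound}) bound $Z=\sup_{f\in\F}(\EE_\sample[f]-\E_\probdist[f])$ by $\E[Z]+\sqrt{2\ln(1/\eta)\,(v+4\frange\E[Z])/m}+2\frange\ln(1/\eta)/(3m)$, where $v$ is the wimpy variance $\sup_{f\in\F}\Var[f]$. Your claim that ``the relevant variance is controlled by $\frange\,\E_\sample[\supdev]$'' is false in general: for a family consisting of a single non-constant function, $\E_\sample[\supdev]=O(1/\sqrt{m})$ while $\sup_f\Var[f]$ is a constant, so the wimpy variance term cannot be absorbed. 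Without an externally supplied variance bound you cannot obtain a bound of the stated form by this route; that is precisely why the paper proves the separate \cref{thm:supdevvar}, which \emph{assumes} such a bound $v$ and is used only for the TFP application, while \cref{thm:supdevmcera} assumes nothing about variances. To repair your argument, replace (i) by the plain McDiarmid step $\supdev\le\E_\sample[\supdev]+\frange\sqrt{\ln(4/\eta)/(2m)}$ and (iii) by the self-bounding bound $\rc\le\era+\frac{1}{2m}\bigl(\sqrt{\frange(4m\,\era+\frange\ln\frac{4}{\eta})\ln\frac{4}{\eta}}+\frange\ln\frac{4}{\eta}\bigr)$; then symmetrization, \cref{lem:mcera}, monotonicity in $\era$, and a union bound over the three probabilistic events yield exactly~\eqref{eq:supdevmcera}.
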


Sharper upper bounds to $\supdev$ can be obtained with the $n$-MCERA when more
information about $\F$ is available. The proof is in %
\iflongversion%
\cref{sec:proofs}%
\else
the appendix of the extended online version at \refmissing%
\fi%
. We use this result for a specific pattern mining task in \cref{sec:appl}.

\begin{theorem}\label{thm:supdevvar}
  Let $v$ be an upper bound to the variance of \emph{every} function in $\F$,
  and let $\eta \in (0,1)$. Define the following quantities
  \begin{align}
    \rho &\doteq \rade^n_m(\F, \sample, \vsigma) + 2 \rmax \sqrt{\frac{\ln
    \frac{4}{\eta}}{2 n m}},\label{eq:supdevvar-rho}\\
    r &\doteq \rho + \frac{1}{2m} \left( \sqrt{\frange \left( 4m \rho +
        \frange \ln \frac{4}{\eta} \right) \ln
      \frac{4}{\eta}} + \frange \ln
    \frac{4}{\eta} \right),\nonumber\\
        \varepsilon &\doteq 2 r + \sqrt{\frac{2 \ln \frac{4}{\eta}
    \left( v + 8 \frange r \right)}{m}}
    + \frac{2 \frange \ln \frac{4}{\eta}}{3m} \enspace. \label{eq:supdevvar-eps}
  \end{align}
  Then, with probability at least $1 - \eta$ over the choice of $\sample$ and
  $\vsigma$, it holds
  \[
    \supdev \le \varepsilon \enspace.
  \]
\end{theorem}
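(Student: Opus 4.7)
The plan is to adapt the three-step high-probability chain behind Theorem~\ref{thm:supdevmcera}, replacing only the final Hoeffding/McDiarmid concentration of $\supdev$ around $\E[\supdev]$ with Bousquet's variance-aware inequality---this is where the variance envelope $v$ enters.

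In detail, step~(i), a bounded-differences Hoeffding inequality on the $nm$ i.i.d.\ Rademacher entries of $\vsigma$ (each affecting $\erade^{n}_{m}$ by at most $2\rmax/(nm)$), gives $\era \le \rho$ with probability at least $1-\eta/4$ over $\vsigma$. Step~(ii) invokes a Bousquet--Bennett-type concentration for $\era$ viewed as a function of the i.i.d.\ sample $\sample$, yielding $\rade \le r$ with probability at least $1-\eta/4$ over $\sample$; the correction $r - \rho = \frac{1}{2m}\bigl(\sqrt{c(4m\rho + c\ln(4/\eta))\ln(4/\eta)} + c\ln(4/\eta)\bigr)$ is exactly of the Bernstein form produced by such a concentration bound. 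Combining steps~(i) and~(ii) with the deterministic symmetrization inequality $\E_\sample[\supdev] \le 2\rade$ yields $\E[\supdev] \le 2r$ on an event of probability at least $1 - \eta/2$.

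Step~(iii) is the new ingredient: I apply Bousquet's variance-aware concentration to the empirical process underlying $\supdev$, using $\Var(f) \le v$ for every $f \in \F$ and the per-function range $c$. With probability at least $1 - \eta/4$, this gives a bound of the form $\supdev \le \E[\supdev] + \sqrt{2\ln(4/\eta)\bigl(v + \alpha c\,\E[\supdev]\bigr)/m} + \beta c\ln(4/\eta)/m$ for constants $\alpha, \beta$ fixed by the specific Bousquet variant. Substituting the high-probability bound $\E[\supdev] \le 2r$ from step~(ii)---valid since the right-hand side is monotone non-decreasing in $\E[\supdev]$---and a final union bound over the three bad events produces $\supdev \le \varepsilon$ with probability at least $1 - 3\eta/4 \ge 1 - \eta$. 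The main obstacle is selecting and stating Bousquet's inequality in the appropriate (two-sided) form so that the substitution yields exactly the $v + 8cr$ inside the square root and the $2c\ln(4/\eta)/(3m)$ Bernstein remainder of the theorem; everything else is routine union-bound bookkeeping.
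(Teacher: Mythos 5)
Your proposal is correct and follows essentially the same route as the paper: Hoeffding/McDiarmid on the $nm$ Rademacher entries to get $\era\le\rho$, a self-bounding concentration of the ERA around its expectation (the paper cites Oneto et al.\ rather than Bousquet for this step, but the Bernstein-form correction is the same) to get $r\ge\rc$, symmetrization to get $\E[Z]\le 2r$ for the one-sided supremum $Z$, and finally Bousquet's inequality with $\alpha=4$ so that substituting $\E[Z]\le 2r$ yields the $v+8\frange r$ term. The ``two-sided form'' you flag as the main obstacle is resolved in the paper not by a two-sided version of Bousquet but by applying the one-sided inequality separately to $\F$ and to $-\F$ (using $\era=\erade(-\F,\sample)$ so the same $r$ works for both), which consumes exactly the remaining $\eta/4$ in your union-bound budget, bringing the total to four events at $\eta/4$ each.
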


Due to the dependency on $\rmax$ in \cref{thm:supdevmcera,thm:supdevvar}, it is
often convenient to use $\erade^n_m(\F - \tfrac{\frange}{2}, \sample, \vsigma)$
in place of $\erade^n_m(\F, \sample, \vsigma)$ in the above theorems, where $\F
- \tfrac{\frange}{2}$ denotes the \emph{range-centralized} family of functions
obtained by shifting every function in $\F$ by $-\tfrac{\frange}{2}$. The
results still hold for $\supdev$ because the SD is invariant to shifting, but
the bounds  to the SD usually improve since the corresponding
$\rmax$ for the range-centralized family is smaller.

\section{\texorpdfstring{\algo}{\algname}}\label{sec:algo}
We now describe and analyze our algorithm \algo\ to efficiently compute the
$n$-MCERA (see~\eqref{eq:mcera}) for a family $\F$ with the binary relation
$\preceq$ defined in~\eqref{eq:preceq} and the blackbox functions $\children$
and $\minimals$ described in \cref{sec:prelims:patterns}.

\subsection{Discrepancy bounds}\label{sec:bounds}

 For $j \in \{ 1, \dotsc, n\}$, we
denote as the \emph{$j$-discrepancy $\discr{f}{j}$ of $f \in \F$ on $\sample$
\wrt\ $\vsigma$} the quantity
\[
  \discr{f}{j} \doteq \sum_{s_{i} \in \sample} \vsigma_{j,i} f(s_{i})
  \enspace.
\]
The $j$-discrepancy is not an \emph{anti-monotonic function}, in the sense that
it does not necessarily hold that $\discr{f}{j} \ge \discr{g}{j}$ for every
descendant $g$ of $f \in \F$. Clearly, it holds
\begin{equation}\label{eq:mceradiscr}
  \erade^{n}_{m}(\F, \sample, \vsigma) = \frac{1}{n m} \sum_{j=1}^{n}
  \sup_{f \in \F} \discr{f}{j} \enspace.
\end{equation}
A na\"{\i}ve computation of the $n$-MCERA would require enumerating \emph{all}
the functions in $\F$ and computing their $j$-discrepancies, $1 \le j \le n$, in
order to find each of the $n$ suprema. We now present novel easy-to-compute
\emph{upper bounds} $\ndbound{f}{j}$ and $\dbound{f}{j}$ to $\discr{f}{j}$ such
that $\ndbound{f}{j} \ge \discr{g}{j}$ and $\dbound{f}{j} \ge \discr{g}{j}$ for
every $g \in \desc(f)$, where $\desc(f)$ denote the set of the
\emph{descendants} of $f$ \wrt\ $\preceq$. This key property (which is a
generalization of \emph{anti-monotonicity} to posets) allows us to derive
efficient algorithms for computing the $n$-MCERA \emph{exactly} without
enumerating \emph{all} the functions in $\F$. Such algorithms take a
branch-and-bound approach using the upper bounds to $\discr{f}{j}$ to prune
large portions of the search space (see \cref{sec:algorithms}).

For every $j \in \{1, \dotsc, n\}$ and $i \in \{ 1, \dotsc, m\}$, let
\[
  \indp \doteq \mathds{1}(\vsigma_{j,i} = 1), \ \text{and}\ \indm \doteq
 \mathds{1}(\vsigma_{j,i} = -1)
\]
and for every $f \in F$ and $x \in \X$, define the functions
\[
  f^+(x) \doteq f(x) \mathds{1}(f(x) \ge 0), \ \text{and}\  f^-(x) \doteq  f(x)
  \mathds{1}(f(x) < 0) \enspace.
\]
It holds $f^+(x) \ge 0$ and $f^-(x) \le 0$ for every $f \in \F$ and $x \in \X$.
For every $j \in \{1, \dotsc, n\}$ and $f \in \F$, define
\begin{align}\label{eq:bounds}
  \ndbound{f}{j} &\doteq \sum_{s_{i} \in \sample} |f(s_i)| &\text{and} \nonumber\\
  \dbound{f}{j} &\doteq \sum_{s_{i} \in \sample} \indp f^+(s_i) - \sum_{s_i \in
  \sample} \indm f^-(s_i) &\enspace.
\end{align}
Computationally, these quantities are extremely straightforward to obtain. Both
$\ndbound{f}{j}$ and $\dbound{f}{j}$ are upper bounds to $\discr{f}{j}$ and to
$\discr{g}{j}$ for all $g \in \desc(f)$ (proof in %
\iflongversion%
\cref{sec:proofs}).
\else
the appendix of the extended online version at \refmissing).
\fi

\begin{theorem}\label{thm:discrbounds}
  For any $f \in \F$ and $j \in \{1, \dotsc, n\}$, it holds
  \[
    \max \left\lbrace  \discr{g}{j} \ :\ g \in \desc(f) \cup \{f\}
    \right\rbrace \leq \dbound{f}{j} \leq \ndbound{f}{j} \enspace.
  \]
\end{theorem}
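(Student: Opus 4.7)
The plan is to prove the two inequalities separately via a per-index comparison, using the definition of $\preceq$ to control how much $g(s_i)$ can differ from $f(s_i)$.

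\textbf{Rightmost inequality.} For each index $i$, exactly one of $\indp$ and $\indm$ equals $1$. Since $f^+(s_i) \in [0, |f(s_i)|]$ and $-f^-(s_i) \in [0, |f(s_i)|]$, the $i$-th summand of $\dbound{f}{j}$ is at most $|f(s_i)|$ regardless of $\vsigma_{j,i}$. Summing over $i$ yields $\dbound{f}{j} \le \ndbound{f}{j}$.

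\textbf{Main inequality.} Splitting $\discr{g}{j}$ according to the sign of $\vsigma_{j,i}$ gives
\[
  \discr{g}{j} = \sum_{i:\vsigma_{j,i}=1} g(s_i) \;-\; \sum_{i:\vsigma_{j,i}=-1} g(s_i),
\]
and analogously $\dbound{f}{j} = \sum_{i:\vsigma_{j,i}=1} f^+(s_i) - \sum_{i:\vsigma_{j,i}=-1} f^-(s_i)$. Since the two sums range over disjoint index sets, it suffices to establish, for every $i$, the two pointwise bounds $g(s_i) \le f^+(s_i)$ and $g(s_i) \ge f^-(s_i)$; the claim then follows by picking whichever of these is relevant for the value of $\vsigma_{j,i}$. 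For the special case $g = f$, both bounds are immediate from the definitions $f^+(x) = \max\{f(x),0\}$ and $f^-(x) = \min\{f(x),0\}$.

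\textbf{Reduction of the pointwise bounds to $\preceq$.} The main step is to read off from~\eqref{eq:preceq} that for every $g \in \desc(f)$ and every $x \in \X$,
\[
  f^-(x) \;\le\; g(x) \;\le\; f^+(x).
\]
This is done by a two-case analysis on the sign of $f(x)$: in the case $f(x) \ge 0$ one has $f^+(x) = f(x)$ and $f^-(x) = 0$, and~\eqref{eq:preceq} gives $g(x) \le f(x) = f^+(x)$ together with $g(x) \ge 0 = f^-(x)$; the case $f(x) < 0$ is symmetric, yielding $f^-(x) = f(x) \le g(x)$ and $g(x) \le 0 = f^+(x)$. Evaluating at $x = s_i$ delivers the two pointwise bounds required above and completes the proof.

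\textbf{Main obstacle.} The one subtle point is precisely this reading of~\eqref{eq:preceq}: one must check that the stated one-sided conditions (lower bound when $f(x) < 0$ and upper bound when $f(x) \ge 0$) jointly confine every descendant $g(x)$ to the interval $[f^-(x), f^+(x)]$, so that its contribution to $\discr{g}{j}$ is dominated coordinatewise by $f^+$ or by $f^-$ according to the sign of $\vsigma_{j,i}$. Once this sandwich is in place, the rest of the argument is a routine decomposition of the sum defining $\discr{g}{j}$.
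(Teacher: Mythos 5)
Your proof is correct and takes essentially the same route as the paper's: the paper first proves $\discr{h}{j}\le\dbound{h}{j}$ for every $h\in\F$ by splitting on the sign of $\vsigma_{j,i}$ and discarding the non-positive contributions, and then proves $\dbound{g}{j}\le\dbound{f}{j}$ for every $g\in\desc(f)$, whereas you simply merge these two steps into one coordinatewise comparison of $\discr{g}{j}$ with $\dbound{f}{j}$ (plus the same easy bound $\dbound{f}{j}\le\ndbound{f}{j}$). The sandwich $f^-(x)\le g(x)\le f^+(x)$ you read off from~\eqref{eq:preceq} is precisely the property the paper's proof extracts from the relation, in the equivalent form $g^+(x)\le f^+(x)$ and $g^-(x)\ge f^-(x)$, so the two arguments rest on the same interpretation of $\preceq$ and differ only in this immaterial reorganization.
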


The bounds we derived in this section are \emph{deterministic}. An interesting
direction for future research is how to obtain sharper \emph{probabilistic}
bounds.

\subsection{Algorithms}\label{sec:algorithms}

We now use the discrepancy bounds $\ndbound{\cdot}{\cdot}$ and
$\dbound{\cdot}{\cdot}$ from \cref{sec:bounds} in our algorithm \algo\ for
computing the exact $n$-MCERA\@. As the real problem is usually not to only
compute the $n$-MCERA but to actually compute an upper bound to the SD, our
description of \algo\ includes this final step, this also enables fair comparison with
existing algorithms that use \emph{deterministic} bounds to the ERA to compute
an upper bound to the SD (see also \cref{sec:experiments}).

\algo\ offers probabilistic guarantees on the quality of the bound it
computes (proof deferred to after the presentation).

\begin{theorem}\label{thm:algocorrectness}
  Let $\delta \in (0,1)$. With probability at least $1 - \delta$ over the choice
  of $\sample$ and of $\vsigma$, the value $\varepsilon$ returned by \algo\ is
  such that $\supdev \le \varepsilon$.
\end{theorem}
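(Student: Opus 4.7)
The plan is to split the proof into two independent pieces: (i) show that the branch-and-bound procedure in \algo\ computes $\erade^n_m(\F,\sample,\vsigma)$ \emph{exactly}, despite pruning; (ii) show that the final conversion step, which plugs the computed $n$-MCERA into the bound from \cref{thm:supdevmcera} with $\eta = \delta$, yields a quantity $\varepsilon$ that upper bounds $\supdev$ with probability at least $1 - \delta$. Piece (ii) is essentially a citation of \cref{thm:supdevmcera}, so the real work is in piece (i).

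For piece (i), I would start from the identity~\eqref{eq:mceradiscr}, which reduces computing the $n$-MCERA to computing $\sup_{f \in \F} \discr{f}{j}$ for each $j \in \{1,\dotsc,n\}$. For a fixed $j$, \algo\ performs a traversal of the poset that starts from $\minimals(\F)$ and expands nodes via $\children$, pruning a node $f$ whenever the running supremum $M_j$ already satisfies $M_j \ge \dbound{f}{j}$ (or analogously for $\ndbound{f}{j}$). I would argue by induction on the depth of the poset that no pruning can discard a function $g \in \F$ whose $j$-discrepancy exceeds the running maximum: if $g$ is pruned, then there is some ancestor $f$ of $g$ (or $g$ itself) such that the pruning test fired at $f$, and by \cref{thm:discrbounds} we have
\[
  \discr{g}{j} \le \dbound{f}{j} \le M_j \enspace,
\]
so $g$ cannot change the supremum. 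Since the traversal starts from all minimal elements (which together dominate every $f \in \F$ via $\preceq$), every function is either visited or safely pruned, so the exact supremum $\sup_{f \in \F} \discr{f}{j}$ is recovered. Summing over $j$ and dividing by $nm$ gives the exact $n$-MCERA by~\eqref{eq:mceradiscr}.

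For piece (ii), once the exact $n$-MCERA $\erade^n_m(\F,\sample,\vsigma)$ is in hand, \algo\ forms $\trade$ as in~\eqref{eq:trade} with $\eta = \delta$ and outputs the right-hand side of~\eqref{eq:supdevmcera} as $\varepsilon$. \cref{thm:supdevmcera} then immediately gives $\Pr[\supdev \le \varepsilon] \ge 1 - \delta$, where the probability is over the sample $\sample$ and the Rademacher matrix $\vsigma$, which are precisely the two sources of randomness in \algo.

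The main obstacle is the inductive correctness argument for the branch-and-bound in piece (i): one must be careful that the definition of $\preceq$ in~\eqref{eq:preceq} together with the $\children$/$\minimals$ blackboxes guarantee that every function in $\F$ is reachable from $\minimals(\F)$ by repeatedly following $\children$, so that pruning at an ancestor $f$ actually forecloses a well-defined subtree containing all relevant descendants. Once this reachability property is made explicit (it is implicit in the poset-family assumption and the description of the blackboxes in \cref{sec:prelims:patterns}), \cref{thm:discrbounds} does the rest.
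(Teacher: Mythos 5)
Your proposal is correct and follows essentially the same route as the paper: piece (i) is exactly the paper's \cref{lem:getNMCERA} (the branch-and-bound in \GetNMCERA{} computes $\erade^n_m(\F,\sample,\vsigma)$ exactly, justified via \cref{thm:discrbounds} and the invariant that no pruned function can exceed the running maxima $\nu_j$), and piece (ii) is the paper's one-line application of \cref{thm:supdevmcera} with $\eta=\delta$. The only stylistic difference is that the paper argues the supremum-attaining function $h_j$ is never pruned, whereas you argue dually that anything pruned cannot beat the running maximum; both rest on the same anti-monotone discrepancy bounds and on reachability from $\minimals(\F)$ via $\children$, which you rightly note must be assumed of the poset family.
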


\begin{algorithm}[htb]
\SetNoFillComment%
\DontPrintSemicolon
  \KwIn{Poset family $\F$, sample $\sample$ of size $m$, $\delta \in
  (0,1)$, $n \geq 1$}
  \KwOut{Upper bound to $\supdev$ with probability $\geq 1 - \delta$.}
  \SetKw{Output}{output}
  \SetKwFunction{Children}{children}
  \SetKwFunction{Draw}{draw}
  \SetKwFunction{Process}{process}
  \SetKwFunction{GetNMCERA}{getNMCERA}
  \SetKwFunction{GetSupDevBound}{getSupDevBound}
  \SetKwFunction{Minimals}{minimals}
  \SetKwFunction{Pop}{pop}
  \SetKwFunction{Delete}{delete}
  \SetKwFunction{Push}{push}
  \SetKwProg{Fn}{Function}{:}{end}
  $\vsigma \gets \text{\Draw{$m$, $n$}}$\label{algline:draw-main}\;
  $\varepsilon \gets \text{\GetSupDevBound{$\F$, $\sample$, $\delta$,
  $\vsigma$}}$\label{algline:retval}\;
  \Return{$\varepsilon$}\;

  \Fn{\GetSupDevBound{$\F$, $\sample$, $\delta$, $\vsigma$}}{%
    $\trade \gets \text{\GetNMCERA{$\F$, $\sample$,
        $\vsigma$}} + 2 \rmax
        \sqrt{\frac{\ln(4/\delta)}{2nm}}$\label{algline:trade}\;
      \Return{r.h.s.\ of~\eqref{eq:supdevmcera} using
      $\eta=\delta$\label{algline:supdevmcera}}\;
  }

  \Fn{\GetNMCERA{$\F$, $\sample$, $\vsigma$}}{%
    $Q \gets$ empty priority queue\label{algline:queue}\;
    \lForEach{$j \in \{1, \dotsc, n\}$}{%
      $\nu_{j} \gets - zm$\label{eq:lbs}
    }
    $\mathcal{J} \gets$ empty dictionary from $\F$ to subsets of $\{1, \dotsc,
    n\}$\label{algline:cands}\;
    $H \gets \emptyset$\label{algline:pruned}\;
    \ForEach{$f \in \text{\Minimals{$\F$}}$}{\label{algline:minimals}
      $Q$.\Push{$f$}\;
      $\mathcal{J}[f] \gets \{1, \dotsc, n\}$\label{algline:candsminimals}\;
    }
    \While{$Q$ is not empty}{\label{algline:queueloop}
      $f \gets Q$.\Pop{}\label{algline:pop}\;
      $Y \gets \emptyset$\label{algline:nextCands}\;
      \ForEach{$j \in \mathcal{J}[f]$ s.t.~$\ndbound{f}{} \ge
        \nu_j$}{\label{algline:candloop}
        \If{$\dbound{f}{j} \ge \nu_j$}{\label{algline:dbound}
          $\nu_{j} \gets \max \{ \nu_{j}, \discr{f}{j} \}$\label{algline:discr}\;
           $Y \gets Y \cup \{j\}$\label{algline:updateNextCands}
        }
      }
      \ForEach{$g \in \text{\Children{$f$}} \setminus
        H$}{\label{algline:childrenloop}
        \lIf*{$g \in \mathcal{J}$}{%
          $N \gets \mathcal{J}[g] \cap Y$
        }
        \lElse{%
          $N \gets Y$\label{algline:childCands}
        }
        \If{$N =\emptyset$}{%
          $H \gets H \cup \{g \}$\label{algline:doPrune}\;
          \lIf{$g \in \mathcal{J}$}{%
            $Q$.\Delete{$g$}\label{algline:delete}
          }
        }
        \Else{%
          \lIf{$g \not\in \mathcal{J}$}{%
            $Q$.\Push{$g$}\label{algline:push}
          }
          $\mathcal{J}[g] \gets N$\label{algline:updateCands}
        }
      }
    }
    \Return{$\frac{1}{n m} \sum_{j=1}^{n} \nu_{j}$}\label{algline:mcera}\;
  }
  \caption{\algo}\label{algo:main}
\end{algorithm}

The pseudocode of \algo\ is presented in \cref{algo:main}. The division in
functions is useful for reusing parts of the algorithm in later sections (e.g.,
\cref{algo:tfp}). After having sampled the $n \times m$ matrix of \iid\
Rademacher random variables (line~\ref{algline:draw-main}), the algorithm calls
the function $\GetSupDevBound$ with appropriate parameters, which in turn calls
the function \GetNMCERA, the real heart of the algorithm. This function computes
the $n$-MCERA $\erade^n_m(\F, \sample, \vsigma)$ by exploring and pruning the
search space (i.e., $\F$) in according to the order of the elements in the
priority queue $Q$ (line~\ref{algline:queue}). One possibility is to explore the
space in Breadth-First-Search order (so $Q$ is just a FIFO queue), while another
is to use the upper bound $\ndbound{f}{}$ as the priority, with the top element
in the queue being the one with maximum priority among those in the queue. Other
orders are possible, but we assume that the order is such that all parents of a
function are explored before the function, which is reasonable to ensure maximum
pruning, and is satisfied by the two mentioned orders. We assume that the
priority queue also has a method \Delete{$e$} to delete an element $e$ in the
queue. This requirement could be avoided with some additional book-keeping, but
it simplifies the presentation of the algorithm.

The algorithm keeps in the quantities $\nu_j$, $j \in \{1, \dotsc, n\}$, the
currently best available lower bound to the quantity $\sup_{f \in \F}
\discr{f}{j}$ (see~\eqref{eq:mceradiscr}), which initially are all $-zm$ (the
lowest possible value of a discrepancy). \algo\ also maintains a dictionary
$\mathcal{J}$ (line~\ref{algline:cands}), initially empty, whose keys will be
elements of $\F$ and the values are subsets of $\{1, \dotsc, n\}$. The value
associated to a key $f$ in the dictionary is a superset of the set of values $j
\in \{1, \dotsc, n\}$ for which $\ndbound{f}{} \ge \nu_i$, i.e., for which $f$ or
one of its descendants \emph{may} be the function attaining the supremum
$j$-discrepancy among all the functions in $\F$ (see~\eqref{eq:mceradiscr}). A
function and all its descendants are pruned when this set is the empty set. The
set of keys of the dictionary $\mathcal{J}$ is, at all times, the set of all and
only the functions in $\F$ that have ever been added to $Q$. The last data
structure is the set $H$ (line~\ref{algline:pruned}), initially empty, which
will contain pruned elements of $\F$, in order to avoid visiting either them or
their descendants.

\algo\ populates $Q$ and $\mathcal{J}$ by inserting in them the minimal elements
of $\F$ \wrt\ $\preceq$ (line~\ref{algline:minimals}), using the set $\{1,
\dotsc, n\}$ as the value for these keys in the dictionary. It then enters a
loop that keeps iterating as long as there are elements in $Q$
(line~\ref{algline:queueloop}). The top element $f$ of $Q$ is extracted at the
beginning of each iteration (line~\ref{algline:pop}). A set $Y$, initially
empty, is created to maintain a superset of the set of values $j \in \{1,
\dotsc, n\}$ for which a child of $f$ \emph{may} be the function attaining the
supremum $j$-discrepancy among all the functions in $\F$
(see~\eqref{eq:mceradiscr}). The algorithm then iterates over the elements $j
\in \mathcal{J}[f]$ s.t.~$\ndbound{f}{}$ is greater than $\nu_j$
(line~\ref{algline:candloop}). The elements for which $\ndbound{f}{} < v_j$ can
be ignored because $f$ and its descendants can not attain the supremum of the
$j$-discrepancy in this case, thanks to \cref{thm:discrbounds}. Computing
$\ndbound{f}{}$ is straightforward and can be done even faster if one keeps a
frequent-pattern tree or a similar data structure to avoid having to scan
$\sample$ all the times, but we do not discuss this case for ease of
presentation. For the values $j$ that satisfy the condition on
line~\ref{algline:candloop}, the algorithm computes $\discr{f}{j}$ and updates
$\nu_j$ to this value if larger than the current value of $\nu_j$
(line~\ref{algline:discr}), to maintain the invariant that $\nu_j$ stores the
highest value of $j$-discrepancy seen so far (this invariant, together with the
one maintained by the pruning strategy, is at the basis of the correctness of
\algo). Finally, $j$ is added to the set $Y$
(line~\ref{algline:updateNextCands}), as it may still be the case that a
descendant of $f$ has $j$-discrepancy higher than $\nu_j$. The
algorithm then iterates over the children of $f$ that have not been pruned,
i.e., those not in $H$ (line~\ref{algline:childrenloop}). If the child $g$ is
such that there is a key $g$ in $\mathcal{J}$ (because before $f$ we visited
another parent of $g$), then let $N$ be $\mathcal{J}[g] \cap Y$, otherwise, let
$N$ be $Y$. The set $N$ is a superset of the indices $j$ s.t.~$g$ may attain the
supremum $j$-discrepancy. Indeed for a value $j$ to have this property, it is
\emph{necessary} that $\dbound{f}{j} \ge \nu_j$ \emph{for every parent $f$} of
$j$ (where the value of $\nu_j$ in this expression is the one that $\nu_j$ had
when $f$ was visited). If $N = \emptyset$, then $g$ and all its descendants can
be pruned, which is achieved by adding $g$ to $H$ (line~\ref{algline:doPrune})
and removing $g$ from $Q$ if it is a key $\mathcal{J}$
(line~\ref{algline:delete}). When $N \neq \emptyset$, first $g$ is added to $Q$
(with the appropriate priority depending on the ordering of $Q$) if it did not
belong to $\mathcal{J}$ yet (line~\ref{algline:push}), and then $\mathcal{J}[g]$
is set to $N$ (line~\ref{algline:updateCands}). This operation completes the
current loop iteration starting at line~\ref{algline:queueloop}.

Once $Q$ is empty, the loop ends and the function \GetNMCERA{} returns the sum
of the values $\nu_j$ divided by $n\cdot m$. The returned value is summed to an
appropriate term to obtain $\trade$ (line~\ref{algline:trade}), which is used to
compute the return value of the function \GetSupDevBound{}
using~\eqref{eq:supdevmcera} with $\eta = \delta$
(line~\ref{algline:supdevmcera}). This value $\varepsilon$ is returned in output
by \algo\ when it terminates (line~\ref{algline:retval}).

The following result is at the core of the correctness of \algo\ (proof in
\cref{sec:proofs}.)

\begin{lemma}\label{lem:getNMCERA}
  \GetNMCERA{$\F$, $\sample$, $\vsigma$} returns the value $\erade^n_m(\F,
  \sample, \vsigma)$.
\end{lemma}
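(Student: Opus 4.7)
By identity~\eqref{eq:mceradiscr}, the lemma reduces to showing that when \GetNMCERA{} returns, $\nu_j = \sup_{f \in \F} \discr{f}{j}$ for every $j \in \{1,\dotsc,n\}$. The inequality $\nu_j \leq \sup_{f \in \F} \discr{f}{j}$ holds throughout the execution as an invariant, since $\nu_j$ is initialized to $-zm$, a trivial lower bound for every $j$-discrepancy, and is only ever updated on line~\ref{algline:discr} to the value $\discr{f}{j}$ for some $f \in \F$.

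For the reverse inequality, the plan is a proof by contradiction. Suppose some $f^* \in \F$ and $j^* \in \{1,\dotsc,n\}$ satisfy $\discr{f^*}{j^*} > \nu_{j^*}$ at termination. Since $\nu_{j^*}$ is monotonically non-decreasing during execution, this strict inequality in fact holds at \emph{every} step. The aim is to prove, by strong induction on the depth of $f$ in the poset (the length of the longest chain of parents from a minimal element to $f$), that for every ancestor $f$ of $f^*$ (and for $f^*$ itself), the function $f$ is eventually popped from $Q$ with $j^* \in \mathcal{J}[f]$, and $j^*$ is inserted into the local set $Y$ while $f$ is being processed. Applying the claim at $f = f^*$, line~\ref{algline:discr} then raises $\nu_{j^*}$ to at least $\discr{f^*}{j^*}$, contradicting the assumption.

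The workhorse of the induction is \cref{thm:discrbounds}, which gives $\ndbound{f}{} \geq \dbound{f}{j^*} \geq \discr{f^*}{j^*} > \nu_{j^*}$ at every point in the execution and for every ancestor $f$ of $f^*$. Hence whenever such an $f$ is processed with $j^* \in \mathcal{J}[f]$, both gating tests on lines~\ref{algline:candloop} and~\ref{algline:dbound} succeed for $j^*$, so $j^*$ is placed in $Y$ and $\nu_{j^*}$ is updated accordingly. The hard part is the bookkeeping for children with multiple parents: $\mathcal{J}[f]$ is built up as an intersection of the $Y$-sets contributed by each processed parent of $f$, and $f$ could, a priori, be pruned into $H$ at line~\ref{algline:doPrune} or deleted from $Q$ at line~\ref{algline:delete} if such an intersection ever became empty. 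However, every parent of $f$ is itself an ancestor of $f^*$ at strictly smaller depth, so the inductive hypothesis applies to it and yields $j^* \in Y$ in its contribution; thus the running intersection $N$ computed at lines~\ref{algline:childCands}--\ref{algline:updateCands} always contains $j^*$, precluding pruning or deletion. Consequently, $f$ is pushed to $Q$ (line~\ref{algline:push}) the first time a parent of it is processed and remains there with $j^* \in \mathcal{J}[f]$ until popped; the base case $f \in \minimals(\F)$ is analogous, using the direct insertion by the loop at line~\ref{algline:minimals} with $\mathcal{J}[f] = \{1,\dotsc,n\}$. This closes the induction and yields the contradiction.
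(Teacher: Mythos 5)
Your proof is correct and follows essentially the same route as the paper's: both rest on the monotonicity of the $\nu_j$ (which are only ever set to actual discrepancy values) together with \cref{thm:discrbounds}, which guarantees that the gating tests on lines~\ref{algline:candloop} and~\ref{algline:dbound} pass at every ancestor of the target function, so the index $j$ survives in every $Y$ and hence in $\mathcal{J}$ along all ancestors, preventing pruning or deletion until the target itself is popped and updates $\nu_j$. The only difference is presentational: you organize the argument as a contradiction with an explicit strong induction on poset depth, whereas the paper argues directly about a supremum-attaining function $h_j$; both are sound.
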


The proof of \cref{thm:algocorrectness} is then just an application of
\cref{lem:getNMCERA} and \cref{thm:supdevmcera} (with $\eta = \delta$), as the
value $\varepsilon$ returned by \algo\ is computed according
to~\eqref{eq:supdevmcera}.

\subsubsection{Limiting the exploration of the search
space}\label{sec:hybridalgorithm}
Despite the very efficient pruning strategy made possible by the upper bounds to
the $j$-discrepancy, \algo\  may still need to explore a large fraction of the
search space, with negative impact on the running time. We now present a
``hybrid'' approach that limits this exploration, while still ensuring the
guarantees from \cref{thm:algocorrectness}.

Let $\beta$ be any positive value and define
\[
  \freqset(\sample , \beta) \doteq \left\lbrace f \in \F : \frac{1}{m}
  \sum_{i=1}^{m} {(f(s_{i}))}^{2} \geq \beta \right\rbrace,
\]
and $\infreqset(\sample , \beta) = \F \setminus \freqset(\sample, \beta)$. In
the case of itemsets mining, $\freqset(\sample , \beta)$ would be the set of
frequent itemsets \wrt\ $\beta\in[0,1]$.

The following result is a consequence of Hoeffding's inequality and a union
bound over $ n \cdot |\infreqset(\sample, \beta)|$ events.

\begin{lemma}\label{lem:hybrid}
  Let $\eta \in (0,1)$. Then, with probability at least $1 - \eta$ over the
  choice of $\vsigma$, it holds that \emph{simultaneously for all $j \in \{ 1,
  \dotsc, n\}$},
  \begin{equation}\label{eq:hybrid}
    \erade^1_m (\infreqset(\sample, \beta), \sample, \vsigma_j) \le
    \sqrt{\frac{2 \beta \log\left( \frac{n \left| \infreqset(\sample , \beta)
          \right|}{\eta} \right)}{m}} \enspace.
  \end{equation}
\end{lemma}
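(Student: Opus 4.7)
The strategy is exactly as the paper hints: apply Hoeffding's inequality to each function in $\infreqset(\sample, \beta)$ and each of the $n$ rows of $\vsigma$, then take a union bound. I will treat $\sample$ as fixed (condition on it if needed) and reason about the randomness coming from $\vsigma$.

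First I would fix an arbitrary $j \in \{1,\dotsc,n\}$ and an arbitrary $f \in \infreqset(\sample, \beta)$. For such $f$, the quantity $\frac{1}{m}\sum_{i=1}^m \vsigma_{j,i} f(s_i)$ is an average of $m$ independent (over the Rademacher variables) mean-zero random variables, each lying in the interval $[-|f(s_i)|, |f(s_i)|]$. Hoeffding's inequality applied to this sum of bounded independent zero-mean random variables gives, for every $t \ge 0$,
\[
  \Pr\!\left( \frac{1}{m} \sum_{i=1}^m \vsigma_{j,i} f(s_i) \ge t \right) \le \exp\!\left( - \frac{m^2 t^2}{2 \sum_{i=1}^m f(s_i)^2} \right) \enspace.
\]
By definition of $\infreqset(\sample, \beta)$, we have $\sum_{i=1}^m f(s_i)^2 < m\beta$, so the right-hand side is bounded by $\exp(-m t^2 / (2\beta))$.

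Next I would choose $t$ to make this single-event failure probability equal to $\eta / (n |\infreqset(\sample, \beta)|)$: solving $\exp(-m t^2/(2\beta)) = \eta/(n|\infreqset(\sample,\beta)|)$ yields precisely the claimed
\[
  t \;=\; \sqrt{\frac{2 \beta \log\!\left( \tfrac{n|\infreqset(\sample,\beta)|}{\eta} \right)}{m}} \enspace.
\]
Applying a union bound over all $f \in \infreqset(\sample, \beta)$ and all $j \in \{1,\dotsc,n\}$ (a total of $n \cdot |\infreqset(\sample, \beta)|$ events), we get that with probability at least $1-\eta$, for every $j$ and every $f \in \infreqset(\sample, \beta)$ simultaneously, $\frac{1}{m}\sum_{i=1}^m \vsigma_{j,i} f(s_i) \le t$. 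Taking the supremum over $f \in \infreqset(\sample, \beta)$ and recognizing, from~\eqref{eq:mcera} with $n=1$, that this supremum equals $\erade^1_m(\infreqset(\sample, \beta), \sample, \vsigma_j)$, gives exactly~\eqref{eq:hybrid} and completes the argument.

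There is no real obstacle: the only thing worth double-checking is that using the empirical second-moment bound $\sum_i f(s_i)^2 < m\beta$ inside Hoeffding's ``sum of squared ranges'' denominator is legitimate—and it is, because $\vsigma_{j,i} f(s_i) \in [-|f(s_i)|,|f(s_i)|]$ has range exactly $2|f(s_i)|$, so the $\sum_i(b_i-a_i)^2$ appearing in Hoeffding equals $4 \sum_i f(s_i)^2 < 4m\beta$, which cancels the factor of $4$ against the $2$ from Hoeffding's exponent to give the clean $\exp(-mt^2/(2\beta))$ bound.
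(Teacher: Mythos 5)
Your proof is correct and follows exactly the route the paper indicates (and only sketches): Hoeffding's inequality applied to each of the $n\cdot|\infreqset(\sample,\beta)|$ pairs $(f,j)$ with the empirical second moment $\sum_i f(s_i)^2 < m\beta$ controlling the squared-range term, followed by a union bound and taking the supremum over the finite family. The bookkeeping of the constant (range $2|f(s_i)|$, so $\sum_i(b_i-a_i)^2 = 4\sum_i f(s_i)^2$, giving $\exp(-mt^2/(2\beta))$) is handled correctly, so nothing is missing.
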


The following is an immediate consequence of the above and the definition of
$n$-MCERA.\@

\begin{theorem}\label{thm:resnmcera}
  Let $\eta \in (0,1)$. Then with probability $\geq 1 - \eta$ over the choice of
  $\vsigma$, it holds
  \begin{align*}
  &\erade^{n}_{m}(\F, \sample, \vsigma)
  = \frac{1}{n} \sum_{j=1}^{n} \max\left\lbrace \erade^{1}_{m}(\freqset(\sample , \beta), \sample, \vsigma_{j}) , \erade^{1}_{m}(\infreqset(\sample , \beta), \sample, \vsigma_{j}) \right\rbrace \\
  & \leq \frac{1}{n} \sum_{j=1}^{n} \max\left\lbrace \erade^{1}_{m}(\freqset(\sample , \beta), \sample, \vsigma_{j}) ,
    \sqrt{\frac{2 \beta \log\left( \frac{n \left| \infreqset(\sample , \beta) \right|}{\eta} \right)}{m}}
  \right\rbrace \enspace.
\end{align*}
\end{theorem}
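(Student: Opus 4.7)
}
The plan is to observe that $\F = \freqset(\sample, \beta) \cup \infreqset(\sample, \beta)$ is a disjoint partition of the function family, and to then decompose the supremum in the definition of the $n$-MCERA accordingly. For each Rademacher sample $\vsigma_j$ we have
\[
  \sup_{f \in \F} \frac{1}{m} \sum_{s_i \in \sample} \vsigma_{j,i} f(s_i) = \max\left\{ \sup_{f \in \freqset(\sample,\beta)} \frac{1}{m} \sum_{s_i \in \sample} \vsigma_{j,i} f(s_i),\ \sup_{f \in \infreqset(\sample,\beta)} \frac{1}{m} \sum_{s_i \in \sample} \vsigma_{j,i} f(s_i) \right\},
\]
and the two inner suprema are exactly $\erade^1_m(\freqset(\sample,\beta), \sample, \vsigma_j)$ and $\erade^1_m(\infreqset(\sample,\beta), \sample, \vsigma_j)$. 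Averaging over $j \in \{1,\dots,n\}$ gives the claimed equality in the statement.

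Next, I would invoke \cref{lem:hybrid} (which is itself a Hoeffding-plus-union-bound argument over the $n \cdot |\infreqset(\sample,\beta)|$ events indexed by pairs $(j, f)$ with $f \in \infreqset(\sample,\beta)$). That lemma states that with probability at least $1 - \eta$ over $\vsigma$, simultaneously for all $j \in \{1,\dots,n\}$,
\[
  \erade^1_m(\infreqset(\sample,\beta), \sample, \vsigma_j) \le \sqrt{\frac{2 \beta \log\!\left( \tfrac{n |\infreqset(\sample,\beta)|}{\eta} \right)}{m}} \enspace.
\]
On this high-probability event, I substitute the deterministic upper bound into the second argument of each $\max\{\cdot,\cdot\}$ term and conclude by monotonicity of $\max$ and of the arithmetic mean. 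The key point is that the uniform (in $j$) guarantee of \cref{lem:hybrid} is exactly what permits replacing the second argument of the $\max$ inside the sum over $j$ without paying an additional union-bound factor.

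The proof is essentially a decomposition plus an application of \cref{lem:hybrid}, so there is no real obstacle; the only subtlety worth being careful about is the simultaneity across $j$, which is precisely why the factor $n$ appears inside the logarithm in \cref{lem:hybrid}. Without the simultaneous statement one would need an extra union bound to move from a per-$j$ bound to a bound valid inside the sum, inflating the logarithmic term by a factor of $n$ in the naive approach; the formulation of \cref{lem:hybrid} avoids this.
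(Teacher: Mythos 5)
Your proposal is correct and follows exactly the route the paper intends: the paper treats \cref{thm:resnmcera} as an immediate consequence of \cref{lem:hybrid} together with the definition of the $n$-MCERA, which is precisely your partition of $\F$ into $\freqset(\sample,\beta)$ and $\infreqset(\sample,\beta)$, the resulting per-$j$ decomposition of the supremum into a maximum, and the substitution of the uniform-in-$j$ Hoeffding bound on the high-probability event. Your spelled-out argument adds nothing beyond what the paper leaves implicit, so no comparison is needed.
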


The result of \cref{thm:resnmcera} is especially useful in situations when it is
possible to compute efficiently reasonable upper bounds on the cardinality of
$\infreqset(\sample , \beta)$, possibly using information from $\sample$ (but
not $\vsigma$). For the case of pattern mining, these bounds are often easy to
obtain: e.g., in the case of itemsets, it holds $|\infreqset(\sample , \beta)|
\le \sum_{s_i \in \sample} 2^{|s_i|}$, where $|s_i|$ is the number of items in
the transaction $s_i$. Much better bounds are possible, and in many other cases,
but we cannot discuss them here due to space limitations.

Combining the above with \algo\ may lead to a significant speed-up thanks
to the fact that \algo\ would be exploring only (a subset of) $\freqset(\sample
, \beta)$ instead of (a subset of) the entire search space $\F$, at the cost of
computing an \emph{upper bound} to $\erade^{n}_{m}(\F, \sample, \vsigma_{j})$,
rather than its exact value. We study this trade-off, which is governed by
the choice of $\beta$, experimentally in \cref{sec:hybridresults}. The
correctness follows from
\cref{thm:algocorrectness,thm:resnmcera,thm:supdevmcera}, and an application of
the union bound.

We now describe this variant \algoh{} of \algo, presented in \cref{algo:mcerah}.
\algoh\ accepts in input the same parameters of \algo, but also the parameters
$\beta$ and $\gamma < \delta$, which controls
the confidence of the probabilistic bound from \cref{thm:resnmcera}. After
having drawn $\vsigma$, \algoh\ computes the upper bound to $|\infreqset(\sample,\beta)|$ (line~\ref{algline:upperboundinfreq}), and calls the function
\GetNMCERA{$\freqset(\sample , \beta)$, $\sample$, $\vsigma$}
(line~\ref{algline:mceracall}), slightly modified \wrt\ the one on
line~\ref{algline:mcera} of \cref{algo:main} so it returns the set of $n$ values
$\{ \nu_{1} , \dotsc , \nu_{n} \}$ instead of their average. Then, it computes
$\trade$ using the r.h.s.\ of~\eqref{eq:hybrid} and returns the bound to the SD
obtained from the r.h.s.~of~\eqref{eq:supdevmcera} with $\eta = \delta -
\gamma$.

\begin{algorithm}[htb]
\SetNoFillComment%
\DontPrintSemicolon
  \KwIn{Poset family $\F$, sample $\sample$ of size $m$, $\delta \in
  (0,1)$, $\beta \in [0,\rmax^{2}]$, $\gamma \in (0,\delta)$}
  \KwOut{Upper bound to $\supdev$ with prob. $\geq 1 - \delta$.}
  \SetKw{Output}{output}
  \SetKwFunction{Children}{children}
  \SetKwFunction{Draw}{draw}
  \SetKwFunction{Process}{process}
  \SetKwFunction{GetNMCERA}{getNMCERA}
  \SetKwFunction{GetSupDevBound}{getSupDevBound}
  \SetKwFunction{Minimals}{minimals}
  \SetKwFunction{Pop}{pop}
  \SetKwFunction{Delete}{delete}
  \SetKwFunction{Push}{push}
  \SetKwProg{Fn}{Function}{:}{end}
  $\vsigma \gets \text{\Draw{$m$, $n$}}$\;
  $\{ v_{1} , \dotsc , v_{n} \} \gets \text{\GetNMCERA{$\freqset(
      \sample , \beta)$, $\sample$, $\vsigma$}}$\;\label{algline:mceracall}
    $\omega \gets $ upper bound to $\left| \infreqset(\sample , \beta) \right|
    $\label{algline:upperboundinfreq}\;
  $\trade \gets \frac{1}{n} \sum_{j=1}^{n} \max\left\lbrace \frac{{v}_{j}}{m},
  \sqrt{\frac{2 \beta \log\left( \frac{n \omega}{\gamma} \right)}{m}}
  \right\rbrace + 2 \rmax \sqrt{ \frac{\ln \left( \frac{4}{\delta-\gamma}
  \right)}{2nm}}$\;
  \Return{r.h.s.\ of~\eqref{eq:supdevmcera} using $\eta=\delta-\gamma$}\;
  \caption{\algoh}\label{algo:mcerah}
\end{algorithm}

It is not necessary to choose $\beta$ a-priori, as long as it is chosen without
using any information that depends on $\vsigma$.  In situations where deciding
$\beta$ a-priori is not simple, one may define instead, for a given value of
$k$ set by the user, the quantity $\beta_{k}$ defined as
\[
\beta_{k} \doteq \min \left\lbrace \beta : \left| \freqset(\sample , \beta) \right| \leq k \right\rbrace.
\]
When the queue $Q$ (line~\ref{algline:queue} of \cref{algo:main}) is sorted by
decreasing value of $\sum_{i=1}^n {(f(s_i))}^2$, the value $k$ is the maximum
number of nodes the branch-and-bound search in \GetNMCERA{} may enumerate.
We are investigating more refined bounds than \cref{thm:resnmcera}.

\subsection{Improved bounds for \texorpdfstring{$n=1$}{n=1}}\label{sec:1mcera}

For the special case of $n=1$, it is possible to derive a better bound to the
SD than the one presented in \cref{thm:supdevmcera}. This result is new and of
independent interest because it holds for \emph{any} family $\F$. The proof is
in %
\iflongversion%
\cref{sec:proofs}.
\else
the appendix of the extended online version at \refmissing.
\fi

\begin{theorem}\label{thm:supdev1mcera}
  Let $\eta \in (0, 1)$. With probability at least $1 - \eta$ over the choice of
  $\sample$ and $\sigma$, it holds that
  \begin{equation}\label{eq:supdev1mcera}
    \supdev \le 2\erade^{1}_{m} \left( \F - \frac{\frange}{2}, \sample, \vsigma
    \right) + 3 \frange \sqrt{\frac{\ln \frac{2}{\eta}}{2m}} \enspace.
  \end{equation}
\end{theorem}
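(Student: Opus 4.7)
The plan is to combine the classical Rademacher symmetrization argument with two applications of McDiarmid's bounded-differences inequality, tailored to the single Monte-Carlo draw $\vsigma$ used by the 1-MCERA. I would work throughout with the range-centralized family $\F' \doteq \F - \frange/2$: since $\supdev$ is invariant under shifts of $\F$, and $\E_\vsigma[\sigma_i]=0$ implies that the expected Rademacher complexities of $\F$ and $\F'$ coincide, this substitution preserves the target quantities while ensuring $|g(x)|\le \frange/2$ for every $g \in \F'$, which tightens every bounded-differences constant appearing below.

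First, I would apply McDiarmid's inequality to $\supdev$ viewed as a function of the sample $\sample$: swapping a single $s_i$ changes $\EE_\sample[f]$ (and hence $\supdev$) by at most $\frange/m$ uniformly in $f$, so with probability at least $1-\eta/2$,
\[
  \supdev \le \E_\sample[\supdev] + \frange\sqrt{\ln(2/\eta)/(2m)}.
\]
The classical ghost-sample symmetrization, applied to each of the two one-sided versions of $\supdev$ and recombined, then yields $\E_\sample[\supdev] \le 2\,\E_{\sample,\vsigma}[\erade^{1}_{m}(\F', \sample, \vsigma)]$.

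Second, I would apply McDiarmid to $\erade^{1}_{m}(\F', \sample, \vsigma)$ viewed as a function of all $2m$ coordinates $(s_1,\dots,s_m,\sigma_1,\dots,\sigma_m)$. Each bounded-differences constant is at most $\frange/m$: sample swaps because $|g(s_i)-g(s_i')|\le \frange$, and Rademacher sign flips because $2|g(s_i)|\le \frange$ by centralization. This yields, with probability at least $1-\eta/2$,
\[
  \E_{\sample,\vsigma}[\erade^{1}_{m}(\F', \sample, \vsigma)] \le \erade^{1}_{m}(\F', \sample, \vsigma) + \frange\sqrt{\ln(2/\eta)/m}.
\]
A union bound over the two concentration events, combined with the factor $2$ from symmetrization, then delivers a bound of the stated form.

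The main obstacle will be calibrating the constants so that the leading factor in the final bound is exactly $3$. A direct accounting with the two steps above yields the slightly larger constant $1+2\sqrt{2}\approx 3.83$; sharpening to $3$ likely requires either a single joint application of McDiarmid to the combined random function $\supdev(\sample) - 2\,\erade^{1}_{m}(\F',\sample,\vsigma)$ (absorbing the union-bound overhead and exploiting cancellations in the bounded-differences accounting), or decoupling the sample-only from the Rademacher-only concentration with a three-way budget on $\eta$ so that each of the three resulting error terms takes the clean form $\frange\sqrt{\ln(2/\eta)/(2m)}$. Either way, the asymptotic form is immediate from the three ingredients above; only the bookkeeping of constants is delicate.
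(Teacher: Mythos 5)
Your overall toolkit (range centralization, symmetrization, McDiarmid) matches the paper's, but the proof as you execute it does not establish the stated bound, and you essentially say so yourself: the two separate McDiarmid applications give a constant of $1+2\sqrt{2}\approx 3.83$ rather than $3$. The first repair you suggest is exactly what the paper does. It applies McDiarmid \emph{once} to the joint function $h \doteq \sup_{g}\bigl(\EE_\sample[g]-\E_\probdist[g]\bigr) - 2\erade^{1}_{m}(\mathcal{G},\sample,\vsigma)$ with $\mathcal{G}=\F-\frange/2$, viewed as a function of the $m$ independent \emph{pairs} $(\vsigma_{1,i},s_i)$: changing one pair moves the one-sided deviation by at most $\frange/m$ and the $1$-MCERA by at most $\frange/m$ (here centralization is what makes a sign flip cost only $\frange$ rather than $2\rmax$), so $h$ has bounded differences $3\frange/m$, and since symmetrization gives $\E_{\sample,\vsigma}[h]\le 0$, a single tail bound at level $\eta/2$ yields exactly the $3\frange\sqrt{\ln(2/\eta)/(2m)}$ term. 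So the constant $3$ comes from summing bounded-difference increments inside one McDiarmid application, not from a three-way split of $\eta$.

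A second, independent issue: your intermediate claim $\E_\sample[\supdev]\le 2\,\E_{\sample,\vsigma}[\erade^{1}_{m}(\F',\sample,\vsigma)]$ is not justified for the two-sided supremum. Writing $\supdev=\max(A,B)$ with $A=\sup_f(\EE_\sample[f]-\E_\probdist[f])$ and $B=\sup_f(\E_\probdist[f]-\EE_\sample[f])$, symmetrization bounds $\E[A]$ and $\E[B]$ separately, but $\E[\max(A,B)]$ is not bounded by $\max(\E[A],\E[B])$; the standard two-sided symmetrization instead produces a Rademacher average with an absolute value inside the supremum, which is not the quantity in the theorem. The paper avoids this by never passing to $\E[\supdev]$: it proves the one-sided high-probability event for $\mathcal{G}$ and the mirrored event for $-\mathcal{G}$ (using $\erade^{1}_{m}(-\mathcal{G},\sample,-\vsigma)=\erade^{1}_{m}(\mathcal{G},\sample,\vsigma)$), each at level $\eta/2$, and takes a union bound. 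You should restructure your argument the same way; combined with the joint McDiarmid application above, this yields the theorem as stated.
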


The advantage of~\eqref{eq:supdev1mcera} over~\eqref{eq:supdevmcera} (with
$n=1$) is in the smaller ``tail bounds'' terms that arise thanks to a single
application of a probabilistic tail bound, rather than three such applications.
To use this result in \algo, line~\ref{algline:retval} must be replaced with
\[
  \varepsilon \gets\ \text{\GetNMCERA{$\F$, $\sample$, $\vsigma$}} + 3 \frange
  \sqrt{\frac{\ln \frac{2}{\delta}}{2m}};
\]
so the upper bound to the SD is computed according to~\eqref{eq:supdev1mcera}.
The same guarantees as in \cref{thm:algocorrectness} hold for this modified
algorithm.


\begin{figure*}[htb]
\iflongversion
\else
\vspace{-10pt}
\fi
\centering
\begin{subfigure}{.75\textwidth}
  \centering
  \includegraphics[width=\textwidth]{./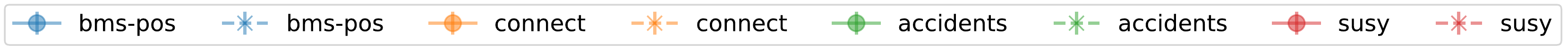}
\end{subfigure}%
\centering
\\
\begin{subfigure}{.3\textwidth}
  \centering
  \includegraphics[width=\textwidth]{./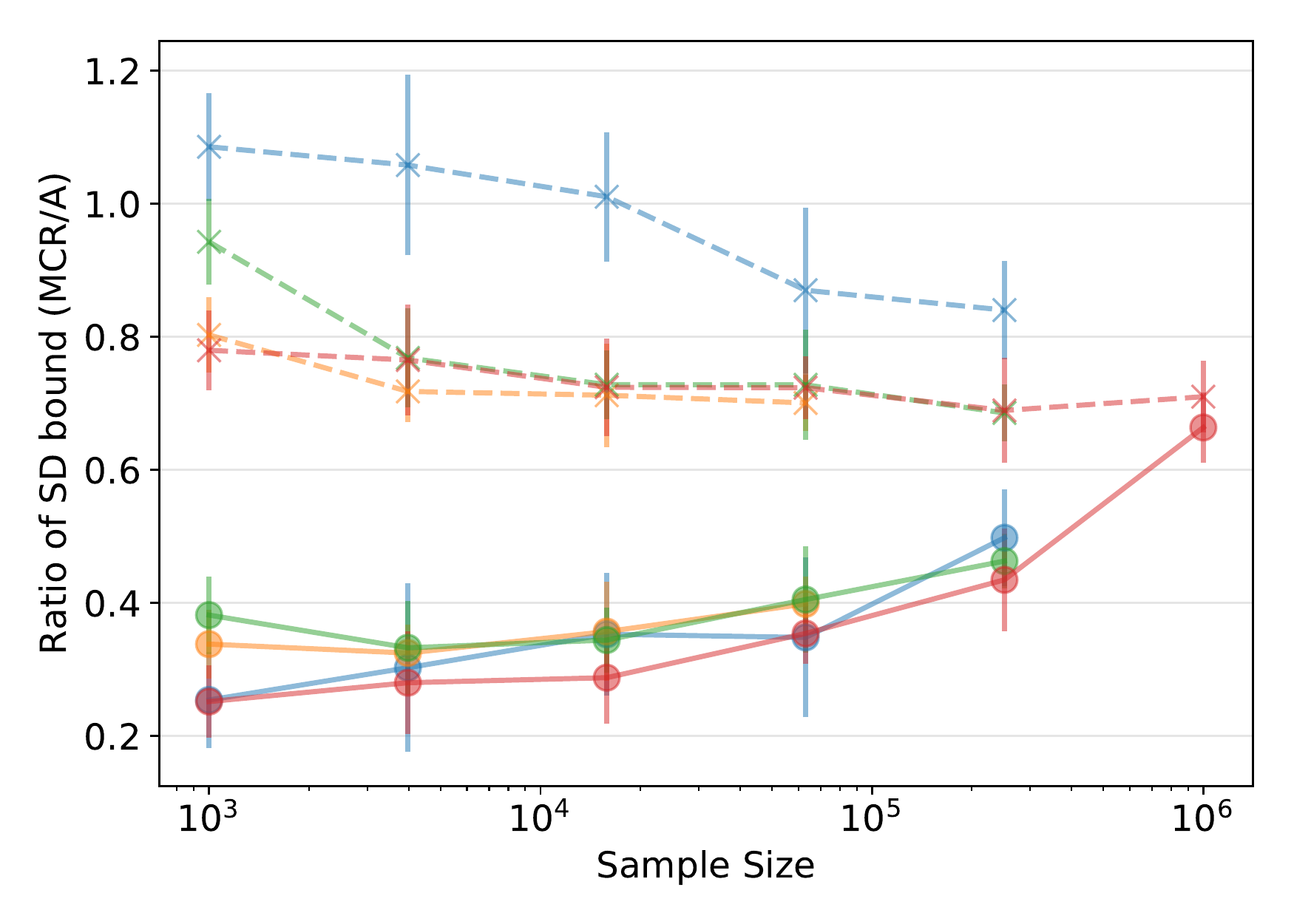}
  \caption{\iflongversion\else\vspace{-5pt}\fi$n=1$.}
\end{subfigure}%
\begin{subfigure}{.3\textwidth}
  \centering
  \includegraphics[width=\textwidth]{./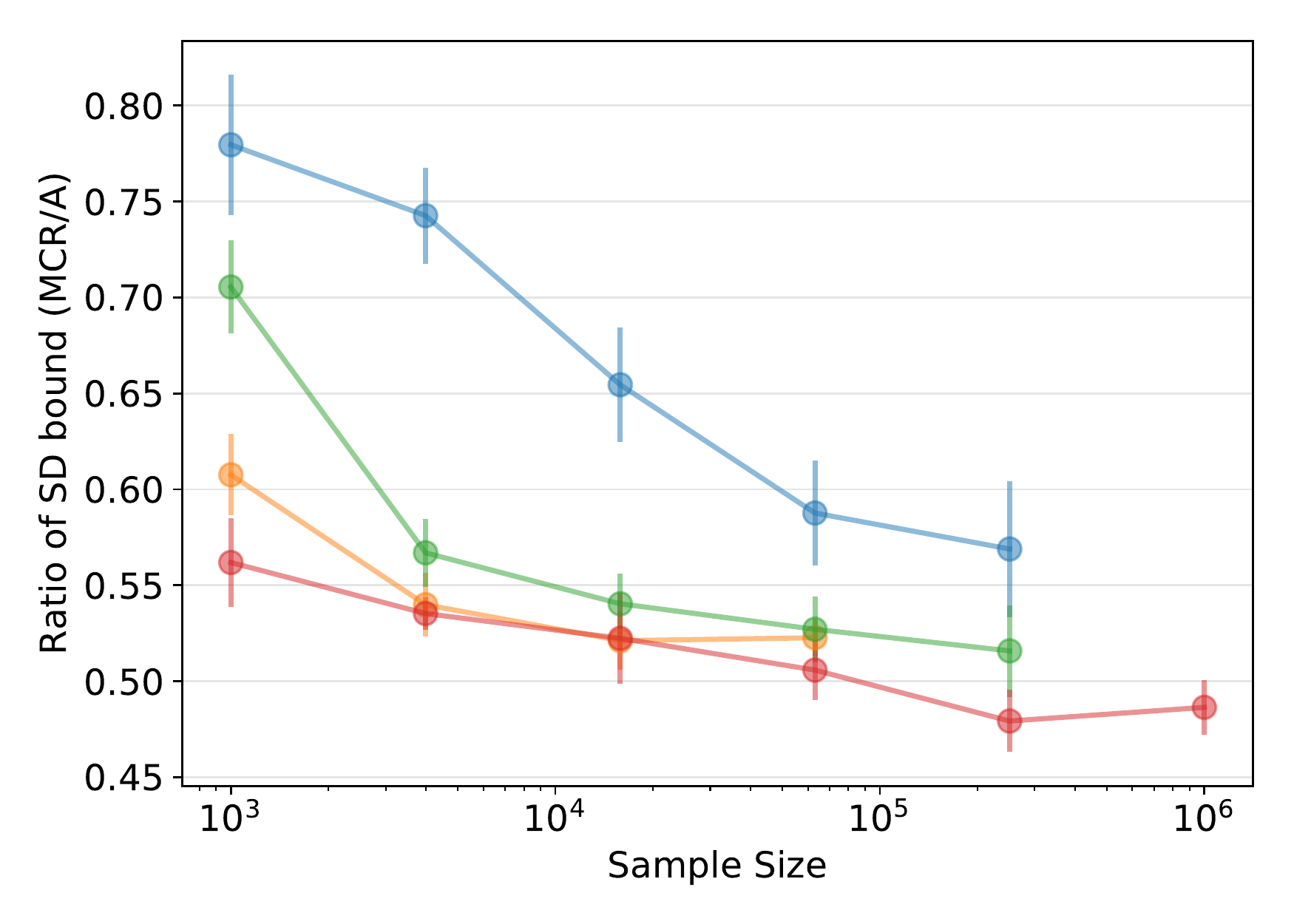}
  \caption{\iflongversion\else\vspace{-5pt}\fi$n=10$.}
\end{subfigure}
\begin{subfigure}{.3\textwidth}
  \centering
  \includegraphics[width=\textwidth]{./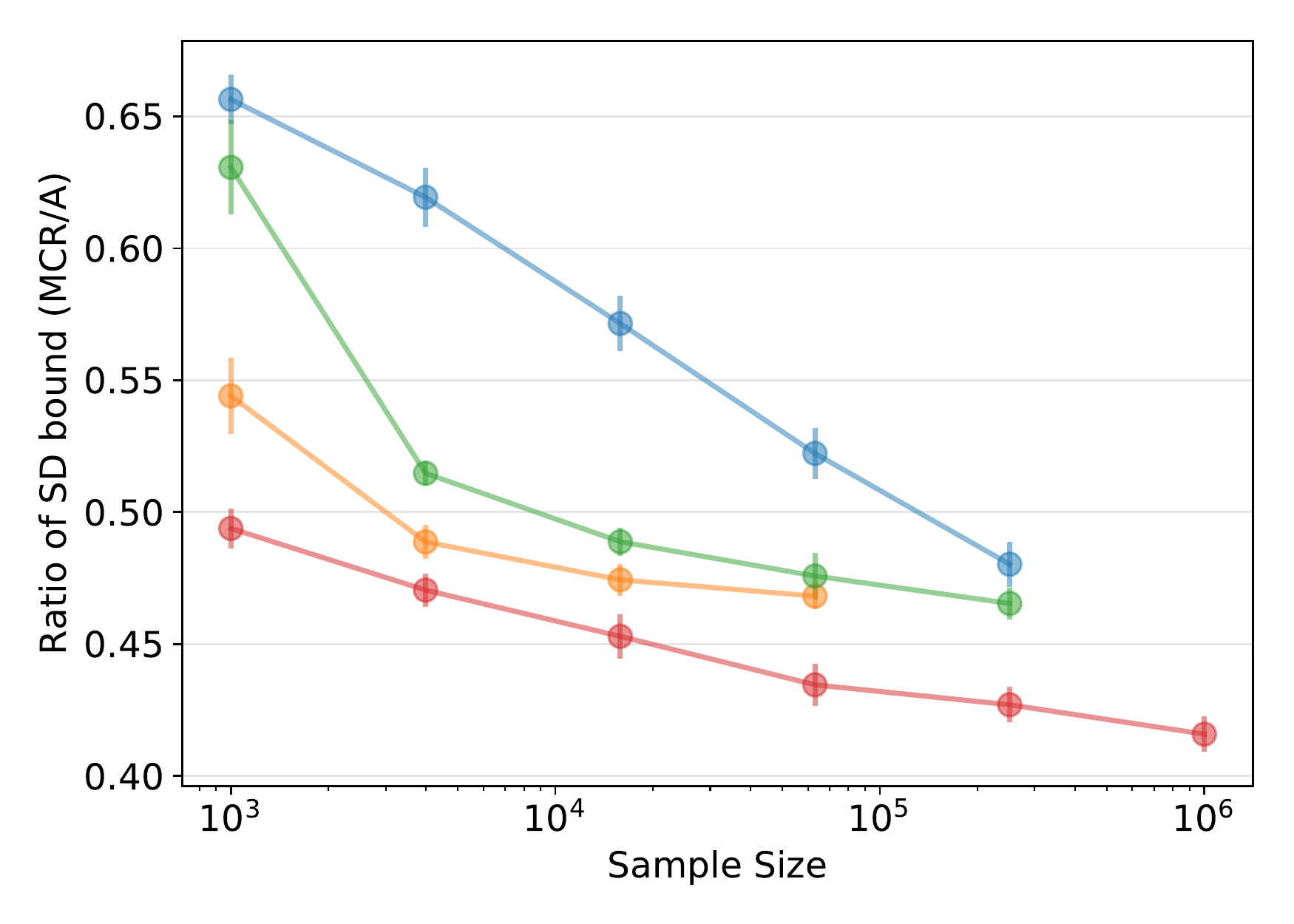}
  \caption{\iflongversion\else\vspace{-5pt}\fi$n=10^{2}$.}
\end{subfigure}
\iflongversion
\else
\vspace{-8pt}
\fi
\caption{Ratios of the SD Bound obtained by \algo\ (\mbox{$n\in \{ 1 ,
  10 , 10^{2} \}$}) and \amira\ for the entire $\F$, for $4$ of the datasets we
  analyzed. For $n=1$, dashed lines use the tail bound from
\cref{thm:supdevmcera} instead of the one
from~\cref{thm:supdev1mcera}.}\label{fig:epsilons}
\Description[Figure 1. The bound on the SD obtained by MCRapper is always smaller than the one computed by Amira.]{Figure 1: MCRapper computes an upper bound to the SD that is always smaller than Amira, for all sample sizes and for all values of n.}
\end{figure*}

\begin{figure*}[htb]
\iflongversion
\else
\vspace{-5pt}
\fi
\centering
\begin{subfigure}{.3\textwidth}
  \centering
  \includegraphics[width=\textwidth]{./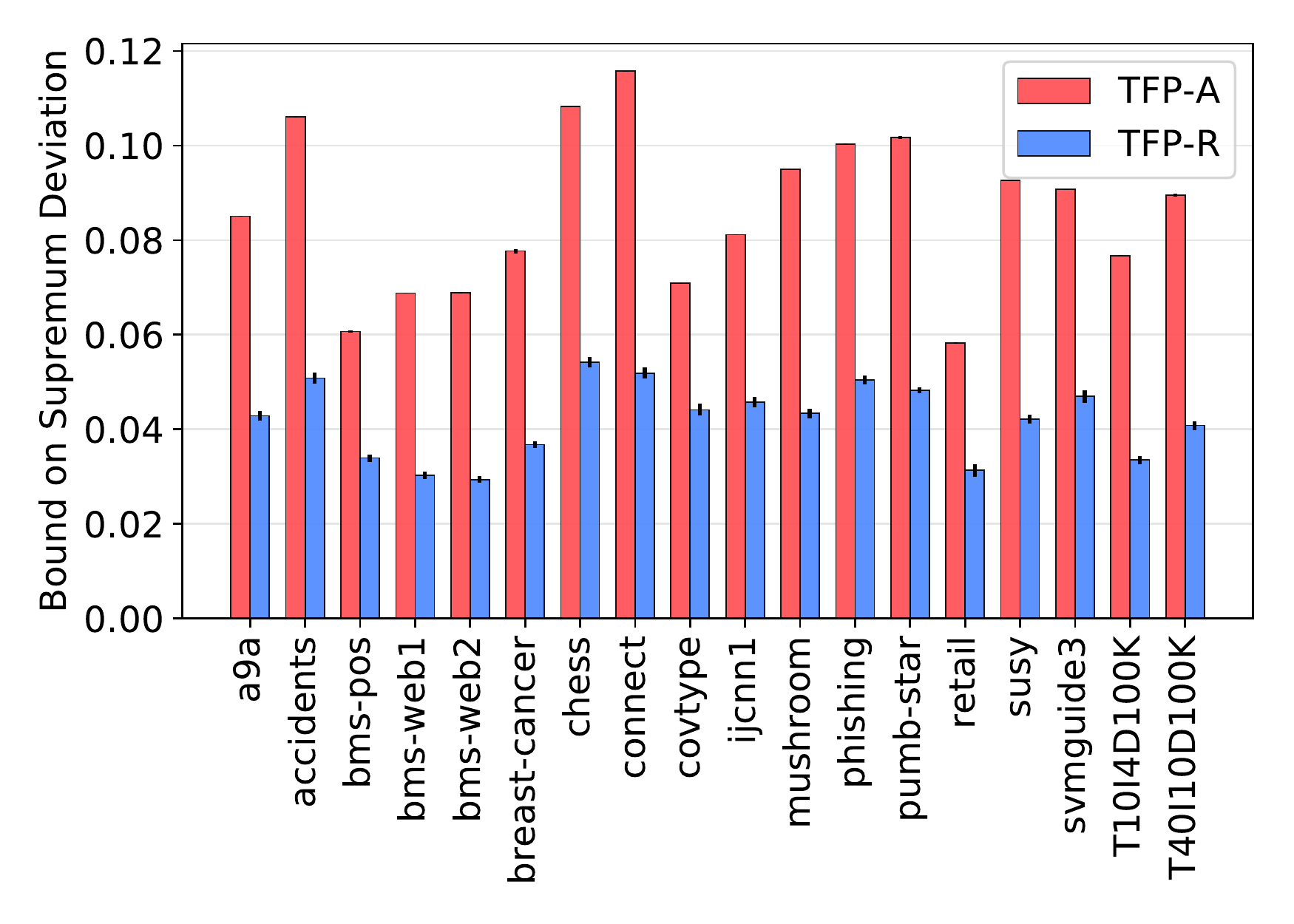}
  \caption{\iflongversion\else\vspace{-5pt}\fi}\label{fig:epsilons_tfp}
\end{subfigure}
\begin{subfigure}{.3\textwidth}
  \centering
  \includegraphics[width=\textwidth]{./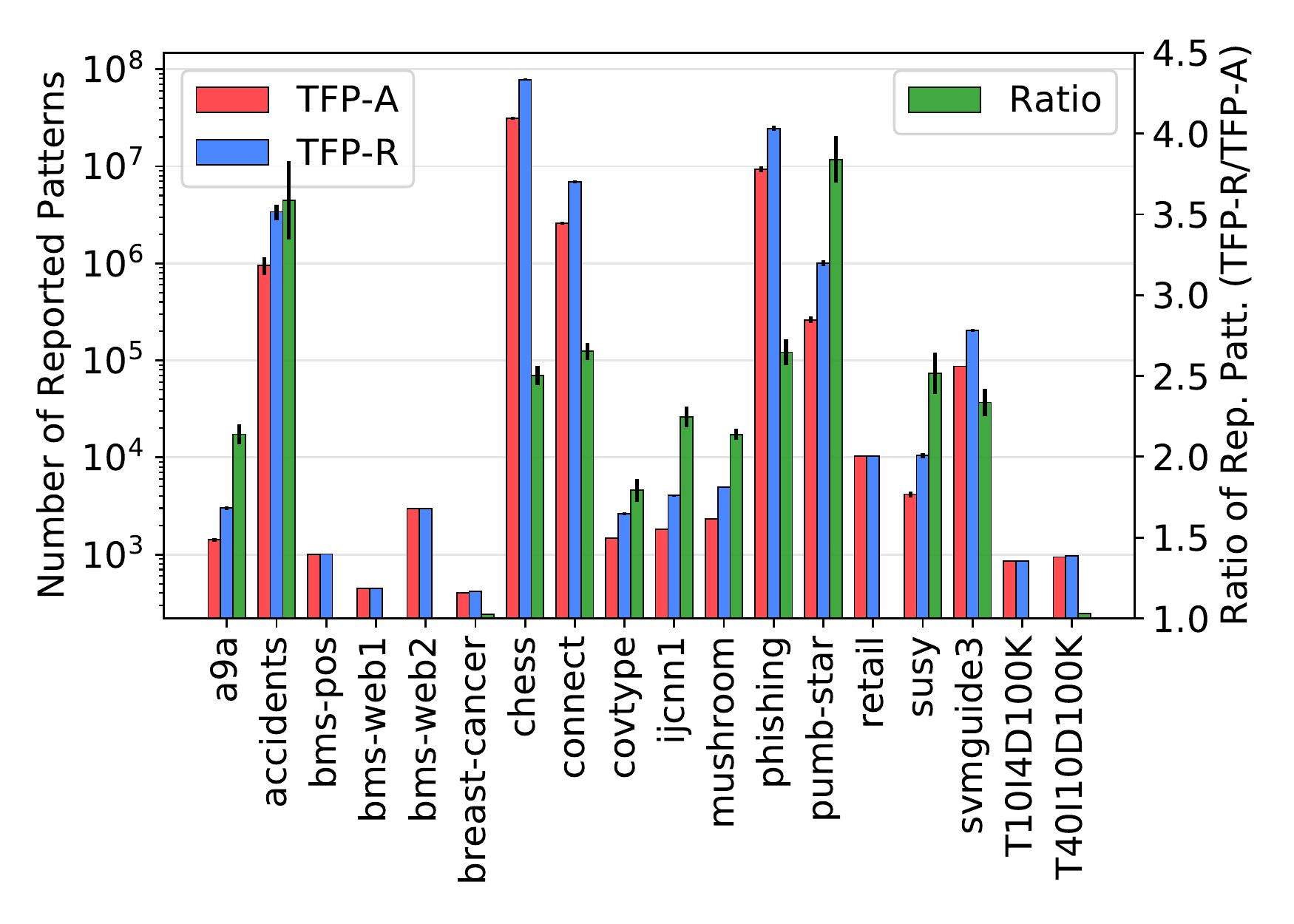}
\caption{\iflongversion\else\vspace{-5pt}\fi}\label{fig:num_tfp}\end{subfigure}
\begin{subfigure}{.3\textwidth}
  \centering
  \includegraphics[width=\textwidth]{./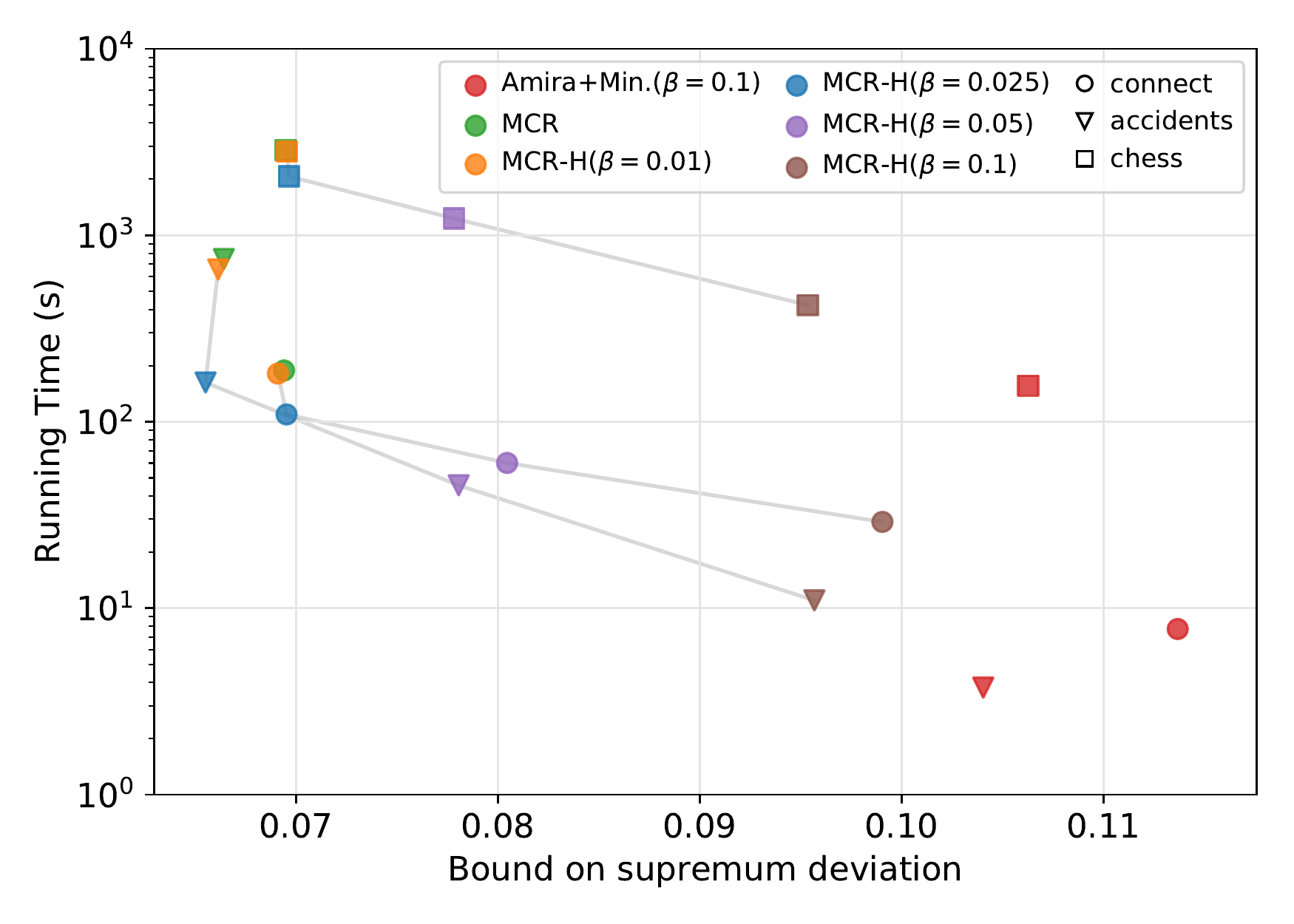}
  \caption{\iflongversion\else\vspace{-5pt}\fi}\label{fig:runningtimes_pareto1}
\end{subfigure}
\iflongversion
\else
\vspace{-8pt}
\fi
\caption{(a) Bound on the Supremum Deviation obtained by \algotfp\ and \algotfpamira.
  (b) Number of reported patterns (left $y$-axis) and ratios (right $y$-axis) by
\algotfp\ and \algotfpamira. (c) Running times of \algo, \algoh\ and \amira\ vs
corresponding upper bound on SD of the entire $\F$. For \algoh\ we use different
values of $\beta$. Each marker shape corresponds to one of the datasets we
considered (other $3$ shown in %
the Appendix%
). For \amira\ we also show the time for mining
the TFPs (\amira+Min.), with freq. $\geq \beta=0.1$, as needed after processing
the sample.}\label{fig:test}
\Description[Figure 2. TFP-R obtain a smaller upper bound on the SD w.r.t. TFP-A, and outputs an higher number of results. MCRapper-H allows to trade-off between the bound on the SD and running time.]{Figure 2: TFP-R always obtains a smaller upper bound on the SD w.r.t. TFP-A, resulting in a set of results that is always larger. The running time of MCRapper can be reduced using MCRapper-H, at the cost of computing a sligthly higher bound on the supremum deviation.}
\end{figure*}

\section{Applications}\label{sec:appl}
To showcase \algo's practical strengths, we now discuss applications to various
pattern mining tasks. The value $\varepsilon$ computed by \algo\ can be used,
for example, to compute, from a random sample $\sample$, a high-quality
approximation of the collection of frequent itemsets in a dataset \wrt\ a
frequency threshold $\theta \in (0,1)$, by mining the sample at frequency
$\theta - \varepsilon$~\citep{RiondatoU14}. Also, it can be used in the
algorithm by \citet{PellegrinaRV19a} to achieve statistical power in significant
pattern mining, or in the progressive algorithm by \citet{ServanSchreiberRZ18}
to enable even more accurate interactive data exploration. Essentially any of
the tasks we mentioned in \cref{sec:intro,sec:relwork} would benefit from the
improved bound to the SD computed by \algo. To support this claim, we now discuss
in depth one specific application.

\paragraph{Mining True Frequent Patterns}
We now show how to use \algo\ together with sharp variance-aware bounds to the
SD (\cref{thm:supdevvar}) for the specific application of identifying the
True Frequent Patterns (TFPs)~\citep{riondato2014finding}. The original work
considered the problem only for itemsets, but we solve the problem
for a general poset family of functions, thus for many other pattern classes,
such as sequences.

The task of TFP mining is, given a pattern language $\lang$ (i.e., a poset
family) and a threshold $\theta \in [0,1]$, to output the set
\[
  \tfp \left( \theta, \lang \right) = \left\lbrace f \in \lang :
    \E_{\probdist}[f] \geq \theta \right\rbrace \enspace.
\]
Computing $\tfp \left( \theta , \lang \right)$ \emph{exactly} requires to know
$\E_{\probdist}[f]$ for all $f$; since this is almost never the case (and in
such case the task is trivial), it is only possible to compute an
\emph{approximation} of $\tfp \left( \theta , \lang \right)$ using information
available from a random bag $\sample$ of $m$ \iid\ samples from $\probdist$. In
this work, mimicking the guarantees given in significant pattern
mining~\citep{HamalainenW18} and in multiple hypothesis testing settings, we are
interested in approximations that are a \emph{subset} of $\tfp(\theta, \lang)$,
i.e., we do not want false \emph{positives} in our approximation, but we accept
false \emph{negatives}. A variant that returns a superset of $\tfp(\theta,
\lang)$ is possible and only requires minimal modifications ot the algorithm.
Due to the randomness in the generation of $\sample$, no algorithm can guarantee
to be able to compute a (non-trivial) subset of $\tfp(\theta, \lang)$ from
\emph{every} possible $\sample$. Thus, one has to accept that there is a
probability over the choice of $\sample$ \emph{and other random choices made by
the algorithm} to obtain a set of patterns that is not a subset of $\tfp(\theta,
\lang)$. We now present an algorithm \algotfp\ with the following guarantee
(proof in %
\iflongversion%
\cref{sec:proofs}%
\else
the appendix of the extended online version at \refmissing%
\fi
).

\begin{theorem}\label{thm:correctness}
  Given $\lang$, $\sample$, $\theta \in [0,1]$, $\delta \in (0, 1)$, and a
  number $n\ge 1$ of Monte-Carlo trials, \algotfp\ returns a set
  $Y$ such that
  \[
    \Pr_{\sample,\vsigma}( Y \subseteq \tfp(\theta, \lang) ) \ge 1 - \delta,
  \]
  where the probability is over the choice of both $\sample$ and the randomness
  in \algotfp, i.e., an $n\times m$ matrix of \iid\ Rademacher variables
  $\vsigma$.
\end{theorem}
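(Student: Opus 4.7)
The plan is to deduce the guarantee from a bound on the supremum deviation restricted to patterns that are \emph{not} TFPs, then invoke \cref{thm:supdevvar} with a small variance proxy. The key observation is that a pattern $f \in Y$ can violate $f \in \tfp(\theta,\lang)$ only when $\E_\probdist[f] < \theta$; consequently, if \algotfp\ admits $f$ into $Y$ exactly when $\EE_\sample[f] \ge \theta + \varepsilon$ for an appropriately chosen data-dependent $\varepsilon$, then
\[
\{Y \not\subseteq \tfp(\theta,\lang)\} \subseteq \Bigl\{ \sup_{f \in \F_\theta}\bigl(\EE_\sample[f] - \E_\probdist[f]\bigr) > \varepsilon \Bigr\},
\]
where $\F_\theta \doteq \{f \in \lang : \E_\probdist[f] < \theta\}$. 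It therefore suffices to choose $\varepsilon$ so that the right-hand event has probability at most $\delta$.

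To do that, I would apply \cref{thm:supdevvar} to $\F_\theta$ with the variance upper bound $v = \theta(1-\theta)$, which is valid for any $\{0,1\}$-valued family because $f \in \F_\theta$ implies $\Var_\probdist[f] = \E_\probdist[f](1-\E_\probdist[f]) \le \theta(1-\theta)$ (in the standard regime $\theta \le 1/2$; otherwise one takes $v = 1/4$). This yields, with probability at least $1-\delta$ over $\sample$ and $\vsigma$, a bound on the supremum deviation over $\F_\theta$ given by the $\varepsilon$ of~\eqref{eq:supdevvar-eps} instantiated with this $v$ and with $\rho$ driven by $\erade^n_m(\F_\theta,\sample,\vsigma)$. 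Setting \algotfp's admission threshold to $\theta + \varepsilon$ and chaining with the reduction above immediately gives $\Pr_{\sample,\vsigma}(Y \subseteq \tfp(\theta,\lang)) \ge 1-\delta$.

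The main obstacle is that $\F_\theta$ is not observable: \algotfp\ cannot compute $\erade^n_m(\F_\theta,\sample,\vsigma)$ directly because membership in $\F_\theta$ depends on the unknown $\E_\probdist$. I expect the algorithm sidesteps this by using \algo\ to compute the $n$-MCERA of a \emph{data-dependent superset} of $\F_\theta$, such as the set of $f$ whose empirical frequency lies below $\theta$ inflated by a safety margin, exploiting that the $j$-discrepancy bounds from \cref{thm:discrbounds} enable efficient branch-and-bound pruning exactly over such a subfamily. Since the supremum defining the $n$-MCERA is monotone in the family, any superset of $\F_\theta$ yields a valid upper bound on $\erade^n_m(\F_\theta,\sample,\vsigma)$ and hence on $\varepsilon$. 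The delicate point is self-referential: the safety margin used to identify the superset must itself be a valid high-probability bound on the supremum deviation, which I would handle by splitting $\delta$ as $\delta_1 + \delta_2$, using a preliminary call to \algo\ at confidence $\delta_1$ to certify that $\F_\theta$ is contained in the explored superset, and then applying the variance-aware bound of \cref{thm:supdevvar} at confidence $\delta_2$. A final union bound combines the two failure events and concludes the argument.
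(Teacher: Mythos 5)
Your core reduction matches the paper's in spirit: apply the variance-aware bound of \cref{thm:supdevvar} to a \emph{fixed but unobservable} family of non-true-frequent patterns with $v=\theta(1-\theta)$ (or $1/4$), and use monotonicity of the $n$-MCERA under supersets so that the quantity the algorithm actually computes upper-bounds the ideal one, whence no pattern with $\E_\probdist[f]<\theta$ can have $\EE_\sample[f]\ge\theta+\hat{\varepsilon}$. (The paper applies the bound to the negative border $\negborder(\tfp(\theta,\lang))$ rather than to all of $\F_\theta$, and then invokes anti-monotonicity of sample means over the poset to exclude every non-TFP below the border; your choice of $\F_\theta$ avoids that extra step and is equally valid, just over a larger family, which would make the computed bound somewhat looser.)

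The gap is in how you resolve the ``self-referential'' containment issue. \algotfp\ does not split $\delta$, makes no preliminary certification call, and takes no union bound over stages: it initializes $\tempset=\lang$, which contains the target family \emph{deterministically}, and invokes the SD-bound routine with the full budget $\eta=\delta$ at every iteration of its refinement loop. The paper's proof is an induction over these iterations showing that, on the single good event of \cref{thm:supdevvar} applied to the fixed family $\mathcal{G}=\negborder(\tfp(\theta,\lang))$, the invariant $\mathcal{G}\subseteq\tempset$ is preserved by each filtering step (every $f\in\mathcal{G}$ has $\EE_\sample[f]<\theta+\hat{\varepsilon}$, so it is never removed from $\tempset$) and only true frequent patterns are ever added to $Y$; the certification you propose to purchase with $\delta_1$ is therefore free. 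Your two-stage $\delta_1+\delta_2$ scheme can be made into a correct proof, but only of a \emph{differently calibrated} algorithm, and only if the second application of \cref{thm:supdevvar} is to the fixed family $\F_\theta$, with the data-dependent superset used solely to upper bound its $n$-MCERA; applying the tail bound directly to a sample-dependent family would be invalid, and your wording leaves this ambiguous. As written, the proposal does not establish the stated guarantee for \algotfp\ itself, and it also omits the induction needed to handle the algorithm's iterative refinement of $\tempset$ and the accumulation of $Y$ across iterations.
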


The intuition for \algotfp\ is the following. Let
$\negborder(\tfp(\theta, \lang))$ be the \emph{negative border} of $\tfp(\theta,
\lang)$, that is, the set of functions in $\lang \setminus \tfp(\theta, \lang)$
such that every parent \wrt\ $\preceq$ of $f$ is in $\tfp(\theta, F)$. If we can
compute an $\hat{\varepsilon} \in (0,1)$ such that, for every $f \in
\negborder(\tfp(\theta, \lang))$, it holds $\EE_\sample[f] \le \theta +
\hat{\varepsilon}$, then we can be sure that any $g \in \lang$ such that
$\EE_\sample[g] > \theta + \hat{\varepsilon}$ belongs to $\tfp(\theta, \lang)$.
This guarantee will naturally be probabilistic, for the reasons we already
discussed.  Since $\negborder(\tfp(\theta, \lang))$ is unknown, \algotfp\
approximates it by \emph{progressively} refining a \emph{superset} $\tempset$ of
it, starting from $\lang$. The correctness of \algotfp\ is based on the fact
that at every point in the execution, it holds $\negborder(\tfp(\theta, \lang))
\subseteq \tempset$, as we show in the proof of \cref{thm:correctness}.

\begin{algorithm}[htb]
  \SetNoFillComment%
  \DontPrintSemicolon
  \KwIn{Poset family $\lang$, sample $\sample$ of size $m$, $\theta \in [0,1]$,
  $\delta \in (0,1)$, $n \geq 1$.}
  \KwOut{A set $Y$ of patterns}
  \SetKwFunction{GetSupDevBoundVar}{getSupDevBoundVar}
  $Y \gets \emptyset$\;
  $\vsigma \gets $ \Draw{$m$, $n$}\label{algline:matrix}\;
  \lIf*{$\theta \ge \tfrac{1}{2}$}{$v \gets \tfrac{1}{4}$} \lElse{$v \gets \theta
  ( 1 - \theta)$\label{algline:variance}}
  $\tempset \gets \lang$\label{algline:tempsetinit}\;
  \Repeat{$\tempset = \tempset'$}{\label{algline:startloop}
    $\hat{\varepsilon} \gets $ \GetSupDevBoundVar{$\tempset$, $\sample$,
    $\delta$, $\vsigma$, $v$}\label{algline:eps}\;
    $\tempset' \gets \tempset$\;
    $\tempset \gets \{ f \in \tempset'\ :\ \EE_{\sample}[f] < \theta +
    \hat{\varepsilon} \}$\label{algline:filtering}\;
    $Y \gets Y \cup (\tempset' \setminus \tempset)$\label{algline:outadd}\;
  }\label{algline:endloop}
  \Return{$Y$}
  \caption{\algotfp}\label{algo:tfp}
\end{algorithm}

The pseudocode of \algotfp\ is presented in \cref{algo:tfp}. The algorithm
first draws the matrix $\vsigma$ (line~\ref{algline:matrix}), and then computes
an upper bound $v$ to the variances of the the frequencies in
$\negborder(\tfp(\theta, \lang))$ (line~\ref{algline:variance}). It then
initializes, as discussed above, the set $\tempset$ to $\lang$
(line~\ref{algline:tempsetinit}) and enters a loop. At each iteration of the
loop, \algotfp\ calls the function \GetSupDevBoundVar\ which returns a value
$\hat{\varepsilon}$ computed as in~\eqref{eq:supdevvar-eps} using $\F =
\tempset$, and $\eta = \delta$. The function \GetNMCERA\ from
\cref{algo:main} is used inside of \GetSupDevBoundVar\ (with parameters
$\tempset$, $\sample$, and $\vsigma$) to compute the $n$-MCERA in the value
$\rho$ from~\eqref{eq:supdevvar-rho}. The properties of $\hat{\varepsilon}$ are
discussed in the proof for \cref{thm:correctness}.


\algotfp\ uses $\hat{\varepsilon}$ to refine the set $\tempset$ with the goal of
obtaining a better approximation of $\negborder(\tfp(\theta, \lang))$. The set
$\tempset'$ stores the current value of $\tempset$, and the new value of
$\tempset$ is obtained by keeping all and only the patterns $f \in \tempset'$
such that $\EE_\sample[f] < \theta + \hat{\varepsilon}$
(line~\ref{algline:filtering}). All the patterns that have been filtered out,
i.e., the patterns in $\tempset' \setminus \tempset$, or in other words, all the
patterns $f \in \tempset'$ such that $\EE_\sample[f] \ge \theta +
\hat{\varepsilon}$, are added to the output set $Y$ (line~\ref{algline:outadd}).
\algotfp\
keeps iterating until the value of $\tempset$ does not change from the previous
iteration (condition on line~\ref{algline:endloop}), and finally the set $Y$ is
returned in output. While we focused on the a conceptually high-level description of \algotfp, we note that an efficient implementation only requires \emph{one} exploration of $\F$, such that $Y$ can be provided in output \emph{as $\F$ is explored}, therefore without executing either multiple instances of \algo\ or, at the end of \algotfp, a frequent pattern mining algorithm to compute $Y$.


\section{Experiments}\label{sec:experiments}
In this section we present the results of our experimental evaluation for \algo.
We compare \algo\ to \amira~\citep{RiondatoU15}, an algorithm that bounds the
Supremum Deviation by computing a deterministic upper bound to the ERA with one
pass on the random sample. The goal of our experimental evaluation is to compare
\algo\ to \amira\ in terms of the upper bound to the SD they compute. We also
assess the impact of the difference in the SD bound provided by \algo\ and
\amira\ for the application of mining true frequent patterns, by comparing our
algorithm \algotfp\ with \algotfpamira, a simplified variant of \algotfp\ that
uses \amira\ to compute a bound $\varepsilon$ on the SD for all functions in
$\lang$, and returns as candidate true frequent patterns the set
$\freqset(\theta + \varepsilon , \sample)$. It is easy to prove that the output
of \algotfpamira\ is a subset of true frequent patterns with probability $\ge 1
- \delta$.  We also evaluate the running time of \algo\ and of its variant
\algoh.

\paragraph{Datasets and implementation}
We implemented \algo\ and \algoh\ in \texttt{C}, by modifying
\topkwy~\cite{pellegrina2018efficient}. Our implementations are available at
\repolink.  The implementation of \amira~\cite{RiondatoU15} has been provided
by the authors. We test both methods on 18 datasets (see \cref{tab:data} in the
Appendix for their statistics), widely used for the benchmark of
frequent itemset mining algorithms. To compare \algo\ to \amira\ in terms of the
upper bound to the SD, we draw, from every dataset, random samples of increasing
size $m$; we considered $6$ values equally spaced in the logarithmic space in
the interval $[10^{3} , 10^{6}]$. We only consider values of $m$ smaller than
the dataset size $|\D|$. For both algorithms we fix $\delta = 0.1$. For \algo\,
we use $n \in \{1 , 10 , 100 \}$.

To compare \algotfp\ to \algotfpamira, we analyze synthetic datasets of size
$m=10^{4}$ obtained by random sampling transactions from each dataset: the true
frequency of a pattern corresponds to its frequency in the original dataset,
which we use as the ground truth. We use $n=10$ for \algotfp, and $\delta=0.1$.
We report the results for $\theta = 0.05$ (other values of $\theta$ and $n$
produced similar results).

For all experiments and parameters combinations we perform $10$ runs (i.e., we
create $10$ random samples of the same size from the same dataset). In all the
figures we report the averages and avg $\pm$ standard deviations of these runs.

\subsection{Bounds on the SD}\label{sec:results1}
\Cref{fig:epsilons} shows the ratio between the upper bound on the SD obtained
by \algo\ and the one obtained by \amira\ for different values of $n$. The bound
provided by \algo\ is always better (i.e., lower) than the bound provided
by \amira\ (e.g., for $n=100$ the bound from \algo\ is always at least $34\%$
smaller than the bound from \amira). For $n=1$ one can see that the
\emph{novel} improved bound from \cref{thm:supdev1mcera} should really be
preferred over the ``standard'' one (dashed lines). Similar results hold for all
other datasets. These results highlight the effectiveness of \algo\ in providing
a much tighter bound to the SD than currently available approaches.

\subsection{Mining True Frequent Patterns}\label{sec:results2}
We compare the \emph{final} SD computed by \algo\ with the one computed by
\algotfpamira. The results are shown in \cref{fig:epsilons_tfp}. Similarly to what we
observed in \cref{sec:results1}, \algo\ provides much tighter bounds being, in
most cases, less than $50\%$ of the bound reported by \amira. We then assessed
the impact of such difference in the mining of TFP, by
comparing the number of patterns reported by \algotfp\ and by \algotfpamira.
Since for both algorithms the output is a subset of the true frequent patterns
with probability $\ge 1-\delta$, reporting a higher number of patterns
corresponds to identifying more true frequent patterns, i.e., to higher power.
\Cref{fig:num_tfp} shows the number of patterns reported by \algotfp\ and by
\algotfpamira\ (left $y$-axis) and the ratio between such quantities (right
$y$-axis). The SD bound from \algo\ is always lower than the SD bound from
\amira, so \algotfp\ always reports at least as many patterns as \algotfpamira,
and for 10 out of 18 datasets, it reports at least \emph{twice} as many
patterns as \algotfpamira. These results show that the SD bound computed by
\algotfp\ provides a great improvement in terms of power for mining TFPs \wrt\
current state-of-the-art methods for SD bound computation.

%
%

%
%

\iflongversion
\else
\vspace{-5pt}
\fi
\subsection{Running time}\label{sec:hybridresults}
For these experiments we take $10$ random samples of size $10^{4}$ of the $6$
most demanding datasets (\texttt{accidents}, \texttt{chess}, \texttt{connect},
\texttt{phishing}, \texttt{pumb-star}, \texttt{susy}; for the other datasets
\algo\ takes much less time than the ones shown) and use the hybrid
approach \algoh\ (\cref{sec:hybridalgorithm}) with different
values of $\beta$ (and $n=1$, which gives a good trade-off between the bounds
and the running time, $\gamma = 0.01$, $\delta = 0.1$). %
We
na\"{\i}vely upper bound  $|\infreqset(\sample, \beta)|$ with
$\sum_{s_{i \in \sample}}2^{|s_{i}|}$, where $|s_{i}|$ is the length of the
transaction $s_{i}$, a \emph{very loose} bound that could be improved using more
information from $\sample$. %
\Cref{fig:runningtimes_pareto1,fig:runningtimes_pareto} (in the Appendix) show
the running time of \algo\ and \amira\ vs.\ the obtained upper bound on the SD
(different colors correspond to different values of $\beta$). With \amira\ one
can quickly obtain a fairly loose bound on the SD, by using \algo\ and \algoh\
one can trade-off the running time for smaller bounds on the SD.%

\iflongversion
\else
\vspace{-4pt}
\fi
\section{Conclusion}\label{sec:concl}
We present \algo, an algorithm for computing a bound to the supremum deviation
of the sample means from their expectations for families of functions with
poset structure, such as those that arise in pattern mining tasks. At the
core of \algo\ there is a novel efficient approach to compute the $n$-sample
Monte-Carlo Empirical Rademacher Average based on fast search space exploration
and pruning techniques. %
\algo\ returns a much better (i.e., smaller) bound to the supremum deviation than
existing techniques. We use \algo\ to extract true frequent patterns and show
that it finds many more patterns than the state of the art.

\iflongversion
\else
\vspace{-6pt}
\fi
\begin{acks}
\iflongversion
\else
\vspace{-3pt}
\fi
Part of this work was conducted while L.P.~was visiting the Department of
Computer Science of Brown University, supported by a
``\grantsponsor{gini}{Fondazione Ing.\ Aldo
Gini}{https://www.unipd.it/fondazionegini}'' fellowship. Part of this work is
supported by the \grantsponsor{NSF}{National Science
Foundation}{https://www.nsf.gov} grant
\grantnum{NSF}{RI-1813444}, by the \grantsponsor{miur}{MIUR of
Italy}{http://www.miur.it} under \grantnum{miur}{PRIN Project n. 20174LF3T8}
AHeAD (Efficient Algorithms for HArnessing Networked Data), and by the
\grantsponsor{unipd}{University of
Padova}{http://www.unipd.it} grant \grantnum{unipd}{STARS 2018}.
\end{acks}

\iflongversion%
\bibliographystyle{plainnat}
\bibliography{bibliography}

\begin{thebibliography}{28}
\providecommand{\natexlab}[1]{#1}
\providecommand{\url}[1]{\texttt{#1}}
\expandafter\ifx\csname urlstyle\endcsname\relax
  \providecommand{\doi}[1]{doi: #1}\else
  \providecommand{\doi}{doi: \begingroup \urlstyle{rm}\Url}\fi

\bibitem[Agrawal and Srikant(1995)]{AgrawalS95}
Rakesh Agrawal and Ramakrishnan Srikant.
\newblock Mining sequential patterns.
\newblock In \emph{Proceedings of the Eleventh International Conference on Data
  Engineering,}, ICDE'95, pages 3--14. IEEE, 1995.

\bibitem[Agrawal et~al.(1993)Agrawal, Imieli\'{n}ski, and Swami]{AgrawalIS93}
Rakesh Agrawal, Tomasz Imieli\'{n}ski, and Arun Swami.
\newblock Mining association rules between sets of items in large databases.
\newblock \emph{SIGMOD Rec.}, 22:\penalty0 207--216, June 1993.
\newblock ISSN 0163-5808.
\newblock \doi{10.1145/170036.170072}.

\bibitem[Ahmed et~al.(2015)Ahmed, Neville, Rossi, and N.]{AhmedNRD15}
N.~K. Ahmed, J.~Neville, R.~A. Rossi, and Duffield N.
\newblock Efficient graphlet counting for large networks.
\newblock In \emph{2015 IEEE International Conference on Data Mining}, pages
  1--10, Nov 2015.
\newblock \doi{10.1109/ICDM.2015.141}.

\bibitem[Bartlett and Mendelson(2002)]{BartlettM02}
Peter~L. Bartlett and Shahar Mendelson.
\newblock {R}ademacher and {G}aussian complexities: Risk bounds and structural
  results.
\newblock \emph{Journal of Machine Learning Research}, 3\penalty0
  (Nov):\penalty0 463--482, 2002.

\bibitem[Bay and Pazzani(2001)]{BayP01}
Stephen~D. Bay and Michael~J. Pazzani.
\newblock Detecting group differences: Mining contrast sets.
\newblock \emph{Data Mining and Knowledge Discovery}, 5\penalty0 (3):\penalty0
  213--246, 2001.

\bibitem[Boley et~al.(2011)Boley, Lucchese, Paurat, and
  G\"{a}rtner]{BoleyLPG11}
Mario Boley, Claudio Lucchese, Daniel Paurat, and Thomas G\"{a}rtner.
\newblock Direct local pattern sampling by efficient two-step random
  procedures.
\newblock \emph{Proceedings of the 17th ACM SIGKDD international conference on
  Knowledge discovery and data mining - KDD '11}, 2011.
\newblock \doi{10.1145/2020408.2020500}.
\newblock URL \url{http://dx.doi.org/10.1145/2020408.2020500}.

\bibitem[Chakaravarthy et~al.(2009)Chakaravarthy, Pandit, and
  Sabharwal]{ChakaravarthyPS09}
Venkatesan~T. Chakaravarthy, Vinayaka Pandit, and Yogish Sabharwal.
\newblock Analysis of sampling techniques for association rule mining.
\newblock In \emph{Proc.~12th Int.~Conf.~Database Theory}, ICDT '09, pages
  276--283, New York, NY, USA, 2009. ACM.
\newblock ISBN 978-1-60558-423-2.
\newblock \doi{10.1145/1514894.1514927}.

\bibitem[{De Stefani} and {Upfal}(2019)]{RadaboundDSU}
L.~{De Stefani} and E.~{Upfal}.
\newblock A rademacher complexity based method for controlling power and
  confidence level in adaptive statistical analysis.
\newblock In \emph{2019 IEEE International Conference on Data Science and
  Advanced Analytics (DSAA)}, pages 71--80, Oct 2019.
\newblock \doi{10.1109/DSAA.2019.00021}.

\bibitem[Dzyuba et~al.(2017)Dzyuba, van Leeuwen, and De~Raedt]{DzyubaVLDR17}
Vladimir Dzyuba, Matthijs van Leeuwen, and Luc De~Raedt.
\newblock Flexible constrained sampling with guarantees for pattern mining.
\newblock \emph{Data Mining and Knowledge Discovery}, 31\penalty0 (5):\penalty0
  1266--1293, Mar 2017.
\newblock ISSN 1573-756X.
\newblock \doi{10.1007/s10618-017-0501-6}.
\newblock URL \url{http://dx.doi.org/10.1007/s10618-017-0501-6}.

\bibitem[Fournier-Viger et~al.(2019)Fournier-Viger, Chun-Wei~Lin, Truong-Chi,
  and Nkambou]{FournierVCWLTCN19}
Philippe Fournier-Viger, Jerry Chun-Wei~Lin, Tin Truong-Chi, and Roger Nkambou.
\newblock A survey of high utility itemset mining.
\newblock In \emph{High-Utility Pattern Mining}. Springer International
  Publishing, 2019.

\bibitem[H{\"a}m{\"a}l{\"a}inen and Webb(2018)]{HamalainenW18}
Wilhelmiina H{\"a}m{\"a}l{\"a}inen and Geoffrey~I. Webb.
\newblock A tutorial on statistically sound pattern discovery.
\newblock \emph{Data Mining and Knowledge Discovery}, Dec 2018.
\newblock ISSN 1573-756X.
\newblock \doi{10.1007/s10618-018-0590-x}.
\newblock URL \url{http://dx.doi.org/10.1007/s10618-018-0590-x}.

\bibitem[Kirsch et~al.(2012)Kirsch, Mitzenmacher, Pietracaprina, Pucci, Upfal,
  and Vandin]{kirsch2012efficient}
Adam Kirsch, Michael Mitzenmacher, Andrea Pietracaprina, Geppino Pucci, Eli
  Upfal, and Fabio Vandin.
\newblock An efficient rigorous approach for identifying statistically
  significant frequent itemsets.
\newblock \emph{Journal of the ACM (JACM)}, 59\penalty0 (3):\penalty0 1--22,
  2012.

\bibitem[Kl{\"o}sgen(1992)]{Klosgen92}
Willi Kl{\"o}sgen.
\newblock Problems for knowledge discovery in databases and their treatment in
  the {S}tatistics {I}nterpreter {E}xplora.
\newblock \emph{International Journal of Intelligent Systems}, 7:\penalty0
  649--673, 1992.

\bibitem[Koltchinskii and Panchenko(2000)]{KoltchinskiiP00}
Vladimir Koltchinskii and Dmitriy Panchenko.
\newblock {R}ademacher processes and bounding the risk of function learning.
\newblock In \emph{High dimensional probability II}, pages 443--457. Springer,
  2000.

\bibitem[Pellegrina and Vandin(2020)]{pellegrina2018efficient}
Leonardo Pellegrina and Fabio Vandin.
\newblock Efficient mining of the most significant patterns with permutation
  testing.
\newblock \emph{Data Mining and Knowledge Discovery}, 2020.

\bibitem[Pellegrina et~al.(2019)Pellegrina, Riondato, and
  Vandin]{PellegrinaRV19a}
Leonardo Pellegrina, Matteo Riondato, and Fabio Vandin.
\newblock {SPuManTE}: Significant pattern mining with unconditional testing.
\newblock In \emph{Proceedings of the 25th ACM SIGKDD International Conference
  on Knowledge Discovery \& Data Mining}, KDD '19, pages 1528--1538, New York,
  NY, USA, 2019. ACM.
\newblock ISBN 978-1-4503-6201-6.
\newblock \doi{10.1145/3292500.3330978}.
\newblock URL \url{http://doi.acm.org/10.1145/3292500.3330978}.

\bibitem[Riondato and Upfal(2014)]{RiondatoU14}
Matteo Riondato and Eli Upfal.
\newblock Efficient discovery of association rules and frequent itemsets
  through sampling with tight performance guarantees.
\newblock \emph{ACM Trans. Knowl. Disc. from Data}, 8\penalty0 (4):\penalty0
  20, 2014.
\newblock \doi{10.1145/2629586}.
\newblock URL \url{http://doi.acm.org/10.1145/2629586}.

\bibitem[Riondato and Upfal(2015)]{RiondatoU15}
Matteo Riondato and Eli Upfal.
\newblock Mining frequent itemsets through progressive sampling with
  {R}ademacher averages.
\newblock In \emph{Proceedings of the 21st ACM SIGKDD International Conference
  on Knowledge Discovery and Data Mining}, KDD '15, pages 1005--1014. ACM,
  2015.

\bibitem[Riondato and Vandin(2014)]{riondato2014finding}
Matteo Riondato and Fabio Vandin.
\newblock Finding the true frequent itemsets.
\newblock In \emph{Proceedings of the 2014 SIAM international conference on
  data mining}, pages 497--505. SIAM, 2014.

\bibitem[Riondato and Vandin(2018)]{RiondatoV18}
Matteo Riondato and Fabio Vandin.
\newblock {MiSoSouP}: Mining interesting subgroups with sampling and
  pseudodimension.
\newblock In \emph{Proc.~24th ACM SIGKDD Int.~Conf.~Knowl.~Disc.~and Data
  Mining}, KDD '18, pages 2130--2139. ACM, 2018.

\bibitem[Santoro et~al.(2020)Santoro, Tonon, and Vandin]{SantoroTV20}
Diego Santoro, Andrea Tonon, and Fabio Vandin.
\newblock Mining sequential patterns with vc-dimension and rademacher
  complexity.
\newblock \emph{Algorithms}, 13\penalty0 (5):\penalty0 123, 2020.

\bibitem[Servan-Schreiber et~al.(2018)Servan-Schreiber, Riondato, and
  Zgraggen]{ServanSchreiberRZ18}
Sacha Servan-Schreiber, Matteo Riondato, and Emanuel Zgraggen.
\newblock {ProSecCo}: Progressive sequence mining with convergence guarantees.
\newblock In \emph{Proceedings of the 18th IEEE International Conference on
  Data Mining}, pages 417--426, 2018.

\bibitem[Shalev-Shwartz and Ben-David(2014)]{ShalevSBD14}
Shai Shalev-Shwartz and Shai Ben-David.
\newblock \emph{Understanding Machine Learning: From Theory to Algorithms}.
\newblock Cambridge University Press, 2014.

\bibitem[Sugiyama et~al.(2015)Sugiyama, Llinares-L{\'o}pez, Kasenburg, and
  Borgwardt]{SugiyamaLLKB15}
Mahito Sugiyama, Felipe Llinares-L{\'o}pez, Niklas Kasenburg, and Karsten~M
  Borgwardt.
\newblock Significant subgraph mining with multiple testing correction.
\newblock In \emph{Proceedings of the 2015 SIAM International Conference on
  Data Mining}, pages 37--45. SIAM, 2015.

\bibitem[Terada et~al.(2013)Terada, Okada-Hatakeyama, Tsuda, and
  Sese]{TeradaOHTS13}
Aika Terada, Mariko Okada-Hatakeyama, Koji Tsuda, and Jun Sese.
\newblock Statistical significance of combinatorial regulations.
\newblock \emph{Proceedings of the National Academy of Sciences}, 110\penalty0
  (32):\penalty0 12996--13001, 2013.

\bibitem[Toivonen(1996)]{Toivonen96}
Hannu Toivonen.
\newblock Sampling large databases for association rules.
\newblock In \emph{Proc.~22nd Int.~Conf.~Very Large Data Bases}, VLDB '96,
  pages 134--145, San Francisco, CA, USA, 1996. Morgan Kaufmann Publishers Inc.
\newblock ISBN 1-55860-382-4.

\bibitem[Tonon and Vandin(2019)]{tonon2019permutation}
Andrea Tonon and Fabio Vandin.
\newblock Permutation strategies for mining significant sequential patterns.
\newblock In \emph{2019 IEEE International Conference on Data Mining (ICDM)},
  pages 1330--1335. IEEE, 2019.

\bibitem[Vapnik(1998)]{Vapnik98}
Vladimir~N. Vapnik.
\newblock \emph{Statistical learning theory}.
\newblock Wiley, 1998.
\newblock ISBN 0471030031.

\end{thebibliography}

\begin{thebibliography}{27}
\bibitem[Koltchinskii and Panchenko(2000)]{KoltchinskiiP00a}
V.~Koltchinskii and D.~Panchenko.
\newblock {R}ademacher processes and bounding the risk of function learning.
\newblock In \emph{High dimensional probability II}, 443--457. Springer,
  2000.

\bibitem[Oneto et~al.(2013)Oneto, Ghio, Anguita, and Ridella]{AOnetoGAR13}
L.~Oneto, A.~Ghio, D.~Anguita, and S.~Ridella.
\newblock An improved analysis of the {R}ademacher data-dependent bound using
  its self bounding property.
\newblock \emph{Neural Networks}, 44:\penalty0 107--111, 2013.

\bibitem[Bousquet(2002)]{Abousquet2002bennett}
O.~Bousquet.
\newblock A {B}ennett concentration inequality and its application to suprema
  of empirical processes.
\newblock \emph{Comptes Rendus Mathem.}, 334\penalty0 (6):\penalty0
  495--500, 2002.

\bibitem[McDiarmid(1989)]{Amcdiarmid1989method}
C.~McDiarmid.
\newblock On the method of bounded differences.
\newblock \emph{Surveys in combinatorics}, 141\penalty0 (1):\penalty0 148--188,
  1989.
\end{thebibliography}
\else%
\vspace{-6pt}

\fi%

\clearpage
\appendix
\section{Appendix}\label{sec:appendix}

\subsection{Missing Proofs}\label{sec:proofs}

\iflongversion%
\begin{theorem}[Symmetrization inequality
  \lbrack\citealp{KoltchinskiiP00a}\rbrack]\label{thm:symmetrization}
  For any family $\F$ it holds $\E_\sample \left[ \sup_{f \in \F} \left( \EE_\sample[f] -
    \E_\probdist[f] \right) - 2 \erade(\F, \sample) \right] \le 0$
\end{theorem}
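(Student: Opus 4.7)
The plan is to follow the classical symmetrization argument due to Gin\'e and Zinn. The target inequality rewrites as $\E_\sample[\sup_{f\in\F}(\EE_\sample[f]-\E_\probdist[f])] \le 2\,\E_\sample[\erade(\F,\sample)]$, so I would aim to upper bound the left-hand side by the right-hand side in a sequence of three main steps: a ghost-sample replacement, a Rademacher-symmetrization step, and a split via subadditivity of the supremum.

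First, I would introduce a \emph{ghost sample} $\sample' = \{s'_1,\dotsc,s'_m\}$ drawn i.i.d.~from $\probdist$ independently of $\sample$. Since $\E_\probdist[f] = \E_{\sample'}[\EE_{\sample'}[f]]$ for every $f$, I can rewrite $\EE_\sample[f]-\E_\probdist[f] = \E_{\sample'}[\EE_\sample[f]-\EE_{\sample'}[f]\mid\sample]$. Pulling the conditional expectation outside the supremum via Jensen's inequality (applied to the convex $\sup$ functional) yields
\[
  \E_\sample\!\left[\sup_{f\in\F}(\EE_\sample[f]-\E_\probdist[f])\right]
  \le \E_{\sample,\sample'}\!\left[\sup_{f\in\F}(\EE_\sample[f]-\EE_{\sample'}[f])\right].
\]

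Second, I would symmetrize using Rademacher variables. Let $\vsigma=\langle\sigma_1,\dotsc,\sigma_m\rangle$ be i.i.d.~Rademacher, independent of $(\sample,\sample')$. For each $i$, the pair $(s_i,s'_i)$ is exchangeable, so the joint distribution of $\{f(s_i)-f(s'_i)\}_i$ is identical to that of $\{\sigma_i(f(s_i)-f(s'_i))\}_i$, simultaneously for all $f$. Hence
\[
  \E_{\sample,\sample'}\!\left[\sup_{f}\frac{1}{m}\sum_{i=1}^m(f(s_i)-f(s'_i))\right]
  = \E_{\sample,\sample',\vsigma}\!\left[\sup_f\frac{1}{m}\sum_{i=1}^m\sigma_i(f(s_i)-f(s'_i))\right].
\]

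Third, I would apply subadditivity of the supremum to split the sum, giving an upper bound $\E[\sup_f\frac{1}{m}\sum\sigma_i f(s_i)] + \E[\sup_f\frac{1}{m}\sum(-\sigma_i)f(s'_i)]$. Because $-\sigma_i$ has the same distribution as $\sigma_i$ and $\sample'$ is distributed as $\sample$, both terms equal $\E_\sample[\erade(\F,\sample)]$, which yields the factor $2$ and concludes the bound. The only subtle point — essentially the one requiring care rather than real difficulty — is justifying the exchange of supremum and conditional expectation, and the measurability of the supremum, which are standard under the usual mild assumptions on $\F$ (e.g., countability or separability); once these are in place the chain of inequalities is routine.
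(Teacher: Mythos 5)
Your proposal is correct: the ghost-sample step via Jensen, the exchangeability-based introduction of Rademacher signs, and the subadditivity split giving the factor $2$ are exactly the classical symmetrization argument. Note that the paper itself offers no proof of this statement---it is quoted as a known result of Koltchinskii and Panchenko---so your write-up is simply the standard proof behind that citation, and it establishes precisely the claimed bound $\E_\sample\left[\sup_{f\in\F}\left(\EE_\sample[f]-\E_\probdist[f]\right)\right]\le 2\,\E_\sample\left[\era\right]$.
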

\begin{theorem}[\lbrack\citealp{Abousquet2002bennett},
  Thm.~2.2\rbrack]\label{thm:sdvarbound}
  Let $Z = \sup_{f \in \F} \left( \EE_\sample[f] -
  \E_\probdist[f] \right)$. Let $\eta \in (0, 1)$. Then, with probability at
  least $1 - \eta$ over the choice of $\sample$, it holds
  \begin{equation}\label{eq:sdvarbound}
    Z \le \E_{\probdist}\left[ Z \right] + \sqrt{\frac{2 \ln \frac{1}{\eta}
    \left( v + 4\frange\E_{\probdist}[ Z ] \right)}{m}}
        + \frac{2 \frange \ln \frac{1}{\eta}}{3m} \enspace.
  \end{equation}
\end{theorem}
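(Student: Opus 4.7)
\textbf{Proof proposal for \cref{thm:sdvarbound}.} This is Bousquet's refinement of Talagrand's concentration inequality for the supremum of an empirical process, so my plan is to obtain it through the entropy (log-Sobolev) method and then invert the resulting Bennett-type tail bound. Let $Z = \sup_{f \in \F}\bigl(\EE_\sample[f] - \E_\probdist[f]\bigr)$ viewed as a function of the independent observations $s_1,\dotsc,s_m$. The first step is to record the bounded-differences and self-bounding properties of $Z$: since every $f \in \F$ takes values in an interval of length $\frange$, replacing one $s_i$ by an independent copy $s_i'$ alters $Z$ by at most $\frange/m$, and moreover $\sum_{i=1}^m \E\bigl[(Z - Z_i')_+^2 \mid s_1,\dotsc,s_m\bigr]$ can be controlled in terms of the per-function variances (bounded by $v$) plus a linear correction in $Z$ itself.

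The second step is the heart of the argument. I would invoke the tensorization of entropy and a modified log-Sobolev inequality for bounded random variables (e.g., Ledoux's entropy method) to obtain a differential inequality on $\lambda \mapsto \log \E[e^{\lambda(Z - \E Z)}]$. Combining this with the self-bounding estimate from Step~1 yields a sub-gamma bound of the shape
\[
\log \E\bigl[e^{\lambda(Z - \E Z)}\bigr] \le \frac{\lambda^2\bigl(v + 4\frange \E[Z]\bigr)}{2\bigl(1 - \lambda \frange/(3m)\bigr)}\cdot\frac{1}{m}
\]
for $\lambda \in (0, 3m/\frange)$. Applying Markov's inequality to $e^{\lambda(Z-\E Z)}$ and optimizing $\lambda$ then produces a Bennett-Bernstein style tail bound
\[
\Pr\bigl(Z - \E[Z] \ge t\bigr) \le \exp\!\left(-\frac{m t^2}{2(v + 4\frange \E[Z]) + (2/3)\frange t}\right).
\]

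The third and final step is purely algebraic: set the right-hand side equal to $\eta$ and solve the resulting quadratic in $t$. Using the standard inequality $\sqrt{x+y}\le \sqrt{x}+\sqrt{y}$ to separate the variance term from the range/$\log(1/\eta)$ term gives exactly the bound stated in \eqref{eq:sdvarbound}, with the $\sqrt{2\ln(1/\eta)(v+4\frange \E[Z])/m}$ contribution from the Gaussian-like regime and the $(2\frange \ln(1/\eta))/(3m)$ contribution from the Poissonian tail.

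The main obstacle is Step~2: producing the self-bounding estimate with the \emph{exact} constant $4$ in front of $\frange \E[Z]$ (rather than the $2$ that appears in Talagrand's original inequality and in some weaker one-sided versions). This requires care in handling the supremum, since the worst-case $f$ depends on the sample, and in applying the modified log-Sobolev inequality only to the positive part of $Z - Z_i'$ to preserve the one-sided form. Once the constant is locked in, Steps~1 and~3 are routine and the stated inequality follows directly.
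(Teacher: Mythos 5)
You should first note that the paper does not prove \cref{thm:sdvarbound} at all: the statement is imported, with a rescaling, from Bousquet's Theorem~2.2 \citep{Abousquet2002bennett}, which is why it carries the citation in its header. There is therefore no in-paper argument to compare against; what you have written is a sketch of the proof of Bousquet's inequality itself, via the entropy method. Your road map is the correct one in outline (bounded differences, a self-bounding control of the conditional second moments, tensorization of entropy with a modified log-Sobolev inequality to get a sub-gamma bound on the log-moment-generating function, then Chernoff and inversion), but as written it is a plan rather than a proof: your Step~2, where the self-bounding estimate is ``combined'' with the entropy method to yield the stated MGF bound, is exactly the delicate part that constitutes Bousquet's contribution. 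Getting the Bennett-type bound with variance proxy $n\sigma^2 + 2\E[Z]$ (rather than the weaker constants of earlier versions) requires a careful choice of the auxiliary functions in the modified log-Sobolev step and a nontrivial analysis of the resulting differential inequality; asserting that the combination ``yields'' the sub-gamma bound leaves the entire technical content unproven.

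Your stated ``main obstacle''---producing the self-bounding estimate with the exact constant $4$ in front of $\frange\,\E_\probdist[Z]$---is also misdirected. Bousquet's theorem, for a centered class with sup-norm at most $1$, gives the variance proxy $n\sigma^2 + 2\E[Z']$ and linear term $x/3$ for the unnormalized supremum $Z'$. Centering costs nothing here, since $\EE_\sample[f]-\E_\probdist[f]$ is invariant under shifting $f$ by a constant, and centered functions have sup-norm at most $\frange$; rescaling by $\frange$ then yields the bound with $v + 2\frange\,\E_\probdist[Z]$ and $\frange\ln(1/\eta)/(3m)$, which is \emph{stronger} than~\eqref{eq:sdvarbound}. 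The constants $4\frange\,\E_\probdist[Z]$ and $2\frange\ln(1/\eta)/(3m)$ in the paper's statement correspond to the cruder rescaling by $2\frange$, so the stated form follows from the cited inequality by a weakening; no self-bounding constant sharper than Talagrand--Bousquet's is needed, and trying to force a ``$4$'' out of the entropy method is solving a harder problem than the statement requires. In short: if your goal is to justify the theorem as used in the paper, the citation plus this rescaling computation suffices; if your goal is a self-contained proof, Step~2 must reproduce Bousquet's argument in full, and your sketch currently does not.
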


\begin{proof}[Proof of \cref{thm:supdevvar}]
  Consider the following events
  \begin{align*}
    \mathsf{E}_1 \doteq \rho \ge \era,&\\
    \mathsf{E}_2 \doteq E_\probdist[\era] &\le \era\\
                 & + \frac{1}{2m} \left (\sqrt{\frange \left( 4m
            \rho + \frange \ln \frac{4}{\delta} \right) \ln
      \frac{4}{\delta}} + \frange \ln \frac{4}{\delta} \right) \enspace.
  \end{align*}

  From \cref{lem:mcera}, we know that $\mathsf{E}_1$ holds with
  probability at least $1 - \tfrac{\delta}{4}$ over the choice of $\sample$ and
  $\vsigma$.  
  $\mathsf{E}_2$ is guaranteed to with probability at least $1 -
  \tfrac{\delta}{4}$ over the choice of $\sample$~\citep[(generalization of)
  Thm.~3.11]{AOnetoGAR13}. 
  Define the event $\mathsf{E}_3$ as the event
  in~\eqref{eq:sdvarbound} for $\eta =
  \tfrac{\delta}{4}$ and the event $\mathsf{E}_4$ as the event
  in~\eqref{eq:sdvarbound} for $\eta = \tfrac{\delta}{4}$ and for $\F = -
  \mathcal{F}$. \citep[Thm.~2.2]{Abousquet2002bennett} 
  tells us that events $\mathsf{E}_3$ and
  $\mathsf{E}_4$ hold each with probability at least $1-\tfrac{d}{4}$ over the
  choice of $\sample$. Thus from the union bound we
  have that the event $\mathsf{E} = \mathsf{E}_1 \cap \mathsf{E}_2 \cap
  \mathsf{E}_3$ holds with probability at least $1 - \delta$ over the choice of
  $\sample$ and $\vsigma$. Assume from now on that the event $\mathsf{E}$ holds.

  Because $\mathsf{E}$ holds, it must be $r \ge \E_\probdist[\erade(\F,
  \sample)]$. From this result and \cref{thm:symmetrization} we have that
  \[
    \E_\probdist[\sup_{f \in \F} \left( \EE_\sample[f] - \E_\probdist[f]
    \right)] \le 2 \E_\probdist[\erade(\F, \sample)] \le 2 r \enspace.
  \]
  From here, and again because $\mathsf{E}$, by plugging $2 r$ in place of
  $E[Z]$ into~\eqref{eq:sdvarbound} (for $\eta=\tfrac{\delta}{4}$), we obtain
  that $\sup_{f \in \F} \left( \EE_\sample[f] - \E_\probdist[f] \right) \le
  \varepsilon$. To show that it also holds
  \[
    \sup_{f \in \F} \left( \EE_\sample[f] - \E_\probdist[f] \right) \le
    \varepsilon
  \]
  (which allows us to conclude that $\supdev \le \varepsilon$), we repeat the
  reasoning above for $-\F$ and use the fact that $\erade(\F, \sample) =
  \erade(-\F, \sample)$, a known property of the ERA, thus
  \begin{align*}
    \rho &\ge \erade(-\F, \sample)\ \text{and}\ r \ge
    E_\probdist[\erade(-\F, \sample)]\ \text{and}\\
    \varepsilon &\ge \sd(-\F, \sample) = \sup_{f
  \in \F} \left( \EE_\sample[f] - \E_\probdist[f] \right) \enspace. \qedhere
    \end{align*}
\end{proof}

\begin{theorem}[McDiarmid's
  inequality~\lbrack\citealp{Amcdiarmid1989method}\rbrack]\label{thm:mcdiarmid}
  Let $\mathcal{Y} \subseteq \R^\ell$, and let $g : \mathcal{Y} \rightarrow \R$
  be a function such that, for each $i$, $1\le i\le \ell$, there is a
  nonnegative constant $c_i$ such that:
  \begin{equation}\label{eq:bounded}
    \sup_{\substack{x_1,\dotsc,x_\ell\\x_i'\in\mathcal{X}}}|g(x_1,\dotsc,x_\ell)-g(x_1,\dotsc,x_{i-1},x'_i,x_{i+1},\dotsc,x_\ell)|\le
    c_i\enspace.
  \end{equation}
  Let $x_1,\dotsc,x_\ell$ be $\ell$ \emph{independent} random variables taking
  value in $\R^\ell$ such that $\langle x_1,\dotsc,x_\ell\rangle \in
  \mathcal{Y}$. Then it holds
  \[
    \Pr\left(g(x_1,\dotsc,x_\ell) -\E_\probdist[g]> t\right)\le e^{-2t^2/C},
  \]
  where $C=\sum_{i=1}^\ell c_i^2$.
\end{theorem}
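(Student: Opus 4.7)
The plan is to prove McDiarmid's inequality by the classical martingale method, instantiating the Doob martingale of $g(x_1,\dotsc,x_\ell)$ with respect to the natural filtration and then applying Azuma--Hoeffding. Concretely, define $Z_0 \doteq \E[g(x_1,\dotsc,x_\ell)]$ and, for $1 \le i \le \ell$, set $Z_i \doteq \E[g(x_1,\dotsc,x_\ell) \mid x_1,\dotsc,x_i]$. By the tower property and because $g$ is measurable in all its arguments, $Z_\ell = g(x_1,\dotsc,x_\ell)$, so the sequence $(Z_i)$ is a martingale with respect to the filtration generated by the $x_j$'s, and $Z_\ell - Z_0 = g(x_1,\dotsc,x_\ell) - \E[g]$.

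The key step is to show that the martingale differences $D_i \doteq Z_i - Z_{i-1}$ lie in an interval of width at most $c_i$ when conditioned on $x_1,\dotsc,x_{i-1}$. Using independence of $x_1,\dotsc,x_\ell$, I can write
\[
  D_i = \E[g(x_1,\dotsc,x_\ell) \mid x_1,\dotsc,x_i] - \E[g(x_1,\dotsc,x_\ell) \mid x_1,\dotsc,x_{i-1}],
\]
where the latter conditional expectation integrates out $x_i$ as well. Define, for fixed $x_1,\dotsc,x_{i-1}$,
\[
  A_i \doteq \inf_{u} \E[g(x_1,\dotsc,x_{i-1},u,x_{i+1},\dotsc,x_\ell) \mid x_1,\dotsc,x_{i-1}] - Z_{i-1},
\]
and analogously $B_i$ with $\sup$ in place of $\inf$. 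Then $A_i \le D_i \le B_i$, and the bounded-differences hypothesis~\eqref{eq:bounded} implies $B_i - A_i \le c_i$, since swapping $u$ for $u'$ inside the expectation changes the integrand pointwise by at most $c_i$.

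Next I invoke Hoeffding's lemma conditionally: for any $s > 0$,
\[
  \E\bigl[e^{s D_i} \mid x_1,\dotsc,x_{i-1}\bigr] \le \exp\!\left(\frac{s^2 c_i^2}{8}\right),
\]
using that $\E[D_i \mid x_1,\dotsc,x_{i-1}] = 0$ and $D_i$ lies in an interval of width $c_i$. Chaining these bounds by iterated conditioning yields
\[
  \E\bigl[e^{s(Z_\ell - Z_0)}\bigr] \le \exp\!\left(\frac{s^2 C}{8}\right).
\]
Finally, a standard Chernoff argument (Markov's inequality applied to $e^{s(Z_\ell - Z_0)}$ with $t$ on the right-hand side) and optimization over $s$ by choosing $s = 4t/C$ gives
\[
  \Pr(g(x_1,\dotsc,x_\ell) - \E[g] > t) \le \exp\!\left(-\frac{2t^2}{C}\right),
\]
as desired.

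The main obstacle is the conditional bound on $D_i$: one must carefully exploit the independence of the $x_j$'s so that the conditional expectation over $x_{i+1},\dotsc,x_\ell$ does not change when $x_i$ is varied, which is exactly what lets~\eqref{eq:bounded} translate from a pointwise bound on $g$ into a range bound on $D_i$. Everything afterwards is the textbook Azuma--Hoeffding/Chernoff machinery.
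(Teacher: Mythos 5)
Your proof is correct: the Doob-martingale decomposition, the conditional range bound $B_i - A_i \le c_i$ via independence, conditional Hoeffding's lemma, and the Chernoff optimization at $s = 4t/C$ is exactly the classical argument for the bounded-differences inequality. Note that the paper does not prove this statement at all — it is quoted as a known result of McDiarmid (1989) and used as a black box — and your argument is the standard proof from that reference, so there is nothing in the paper to diverge from.
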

The following result is an application of McDiarmid's inequality to the
$n$-MCERA, with constants $c_i = \tfrac{2\rmax}{nm}$.
\begin{lemma}\label{lem:mcera}
  Let $\eta \in (0,1)$. Then, with probability at least $1 - \eta$ over the
  choice of $\vsigma$, it holds
  \[
    \era = \E_{\vsigma} \left[ \erade^{n}_{m}(\F, \sample, \vsigma) \right] \leq
    \erade^{n}_{m} (\F, \sample, \vsigma) + 2\rmax \sqrt{ \frac{\ln
    \frac{1}{\eta}}{2nm}} \enspace.
  \]
\end{lemma}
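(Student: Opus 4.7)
The plan is to prove the equality and the inequality in the statement separately. For the equality $\era = \E_\vsigma[\erade^n_m(\F, \sample, \vsigma)]$, I would unfold the definitions from~\eqref{eq:rade} and~\eqref{eq:mcera}: the $n$-MCERA is the average of $n$ quantities, each of which is a supremum of the form $\sup_{f \in \F} \frac{1}{m}\sum_i \vsigma_{j,i} f(s_i)$ evaluated on an independent row $\vsigma_j$ of \iid\ Rademacher variables. Each row has the same joint distribution as the $\vsigma$ used in the definition of $\era$, so by linearity of expectation each summand contributes exactly $\era$, and the average over $n$ trials yields $\era$ as well.

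For the inequality, the plan is to apply McDiarmid's inequality (\cref{thm:mcdiarmid}) to the function $g(\vsigma) \doteq \erade^n_m(\F, \sample, \vsigma)$, viewed as a deterministic function of the $nm$ \iid\ Rademacher random variables $\{\vsigma_{j,i}\}$. The key step is to verify the bounded-differences condition~\eqref{eq:bounded} with constant $c_{j,i} = \tfrac{2\rmax}{nm}$. Flipping a single Rademacher entry $\vsigma_{j,i}$ affects only the $j$-th supremum in the definition of the $n$-MCERA; inside that supremum, the change in $\vsigma_{j,i} f(s_i)$ is at most $2|f(s_i)| \le 2\rmax$, so the value of $\sup_{f} \frac{1}{m}\sum_i \vsigma_{j,i} f(s_i)$ changes by at most $\tfrac{2\rmax}{m}$ (using the standard fact that $|\sup_f A(f) - \sup_f B(f)| \le \sup_f |A(f) - B(f)|$). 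Averaging over $n$ rows contributes the factor $1/n$, yielding $c_{j,i} = \tfrac{2\rmax}{nm}$.

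Plugging into McDiarmid gives $C = nm \cdot \bigl(\tfrac{2\rmax}{nm}\bigr)^2 = \tfrac{4\rmax^2}{nm}$, so applying the tail bound to $-g$ (to control $\E[g] - g$ rather than $g - \E[g]$) yields
\[
  \Pr_\vsigma\!\left(\E_\vsigma[g] - g > t\right) \le \exp\!\left(-\tfrac{t^2 nm}{2\rmax^2}\right).
\]
Setting the right-hand side equal to $\eta$ and solving for $t$ gives $t = 2\rmax\sqrt{\ln(1/\eta)/(2nm)}$, which is exactly the slack appearing in the statement.

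I do not foresee any serious obstacle: the symmetry argument for the equality is routine, and the only delicate step is the bounded-differences check. The main thing to be careful about is the factor $1/(nm)$ rather than $1/m$ in $c_{j,i}$ (the averaging over $n$ independent Rademacher rows is what sharpens the deviation by a factor $\sqrt{n}$), and the switch from a one-sided McDiarmid bound on $g - \E[g]$ to one on $\E[g] - g$, which just requires applying the inequality to $-g$ (the bounded-differences constants are unchanged).
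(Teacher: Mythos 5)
Your proposal is correct and follows essentially the same route as the paper, which proves the lemma exactly as an application of McDiarmid's inequality (\cref{thm:mcdiarmid}) to the $n$-MCERA viewed as a function of the $nm$ Rademacher variables with bounded-differences constants $c_i = \tfrac{2\rmax}{nm}$; your verification of these constants, the resulting $C = \tfrac{4\rmax^2}{nm}$, and the one-sided bound applied to $-g$ all check out, as does the routine expectation identity.
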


The following result gives a probabilistic upper bound to the supremum
deviation using the RA and the ERA~\citep[Thm.~3.11]{AOnetoGAR13}.

\begin{theorem}\label{thm:supdev}
  Let $\eta \in (0, 1)$. Then, with probability at least $1 - \eta$ over the
  choice of $\sample$, it holds
  \begin{align}
    &\supdev \leq 2 \era \nonumber\\
    & + \frac{\sqrt{\frange \left( 4m \era + \frange \ln \frac{3}{\eta}
    \right) \ln \frac{3}{\eta}}}{m} + \frac{\frange\ln \frac{3}{\eta}}{m} +
    \frange\sqrt{\frac{\ln \frac{3}{\eta}}{2m}}
    \enspace.\footnotemark\label{eq:supdevera}
  \end{align}
  \footnotetext{Slightly sharper bounds are possible at the expense of an
  increased complexity of the terms.}
\end{theorem}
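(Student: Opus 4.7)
The strategy is a union bound over three high-probability events, each with failure probability at most $\eta/3$; this accounts for the $\ln\frac{3}{\eta}$ appearing in every tail term of the bound.

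First, observe that $\supdev = \max(\tilde{\supdev}, \tilde{\supdev}')$, where $\tilde{\supdev} \doteq \sup_{f\in\F}(\EE_\sample[f] - \E_\probdist[f])$ and $\tilde{\supdev}' \doteq \sup_{f\in\F}(\E_\probdist[f] - \EE_\sample[f])$. Each is a function of $\sample$ with bounded differences $c_i = \frange/m$ since replacing one sample changes any sample mean by at most $\frange/m$, and hence changes the supremum by at most that much. Applying McDiarmid's inequality (\cref{thm:mcdiarmid}) to each and taking a union bound yields, with probability at least $1 - 2\eta/3$,
\begin{equation*}
\tilde{\supdev} \le \E_\sample[\tilde{\supdev}] + \frange\sqrt{\frac{\ln(3/\eta)}{2m}}\quad\text{and}\quad\tilde{\supdev}' \le \E_\sample[\tilde{\supdev}'] + \frange\sqrt{\frac{\ln(3/\eta)}{2m}}\enspace.
\end{equation*}
This produces the last term of the bound once the two halves are combined.

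Second, I would apply the symmetrization inequality (\cref{thm:symmetrization}) to both $\tilde{\supdev}$ and $\tilde{\supdev}'$ to obtain the deterministic bounds $\E_\sample[\tilde{\supdev}] \le 2\rc$ and $\E_\sample[\tilde{\supdev}'] \le 2\rc$; the latter uses that $\rc$ is invariant under negating $\F$ by the symmetry of the Rademacher distribution. This yields the coefficient $2$ in front of $\era$ once $\rc$ is replaced by $\era$ in the final step.

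Third, and most technically, I need to upper bound $\rc$ by $\era$ plus a sharp variance-aware error. A naive McDiarmid on $\rc - \era$ would give only an $\rmax\sqrt{\ln(3/\eta)/m}$ bound, missing both the variance structure and the better $\frange$ prefactor. I would instead invoke a Bousquet/Talagrand-type concentration inequality for the supremum of the empirical Rademacher process, which yields, with probability at least $1 - \eta/3$ over $\sample$, an inequality of the form
\begin{equation*}
\rc \le \era + \sqrt{\frac{2v\ln(3/\eta)}{m}} + \frac{\frange\ln(3/\eta)}{3m}
\end{equation*}
for a suitable wimpy variance $v$. Exploiting the self-bounding estimate $v \le \frange\rc$ makes this an implicit inequality in $\rc$; solving the resulting quadratic in $\sqrt{\rc}$ while keeping the remainder term inside the square root produces the explicit form
\begin{equation*}
\rc \le \era + \frac{\sqrt{\frange\bigl(4m\era + \frange\ln(3/\eta)\bigr)\ln(3/\eta)}}{2m} + \frac{\frange\ln(3/\eta)}{2m}\enspace.
\end{equation*}
Multiplying by the factor $2$ from Step~2 and combining with the McDiarmid tail from Step~1 via a union bound over the three events reproduces the claimed bound exactly.

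The main obstacle is Step~3: identifying the right Bousquet-type inequality and the self-bounded variance estimate $v \le \frange\rc$, and then carrying out the quadratic inversion so that the Bousquet remainder remains inside the square root — this is what produces precisely the coefficient $4m\era$ (rather than a slacker $C\cdot m\era$) in the final bound. Steps~1 and~2 are routine once the splitting of the absolute supremum is set up.
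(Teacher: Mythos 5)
First, a point of comparison: the paper does not prove this theorem at all. It is imported from the literature and cited as Thm.~3.11 of Oneto et al.~\citep{AOnetoGAR13}, so there is no in-paper proof to match against. Your reconstruction does have the architecture by which that result is actually derived: a three-way union bound combining (i) McDiarmid applied to the two one-sided suprema, (ii) symmetrization for $\F$ and $-\F$, and (iii) a lower-tail concentration of $\era$ around its expectation $\E_\sample[\era]$, inverted to make the bound data-dependent. Your Steps 1 and 2 are correct, including the bounded-difference constant $\frange/m$, the fact that only one copy of $\frange\sqrt{\ln(3/\eta)/(2m)}$ survives after taking the max of the two one-sided deviations, and the use of $\erade(-\F,\sample)=\erade(\F,\sample)$ to get the factor $2$.

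Step 3 is where all the content of the theorem lives, and as written it has a genuine gap. First, Bousquet's inequality controls the \emph{upper} tail of the supremum of an empirical process, whereas what you need is that $\era$ does not fall far \emph{below} $\E_\sample[\era]$, i.e., a left-tail bound; the correct tool is the left-tail concentration inequality for self-bounding functions (Boucheron--Lugosi--Massart), applied after range-centralizing $\F$ so that $\tfrac{2m}{\frange}\era$ is self-bounding --- this is precisely what the cited Oneto et al.\ theorem does. Second, and concretely, the intermediate inequality you display does not invert to the claimed constants: solving $x \le z + \sqrt{2\frange x \ln(3/\eta)/m} + \frange\ln(3/\eta)/(3m)$ for $x=\E_\sample[\era]$ and $z=\era$ yields a term $\tfrac{1}{2m}\sqrt{\frange\ln\tfrac{3}{\eta}\,(8mz'+4\frange\ln\tfrac{3}{\eta})}$ with $z'=z+\frange\ln(3/\eta)/(3m)$, plus an additive $\tfrac{4\frange\ln(3/\eta)}{3m}$ --- i.e., an $8m\era$ coefficient inside the root, not $4m\era$, and a mismatched additive term. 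So the assertion that your route ``reproduces the claimed bound exactly'' does not go through. The implicit inequality that actually inverts to $\E_\sample[\era] \le \era + \tfrac{1}{2m}\bigl(\sqrt{\frange(4m\era+\frange\ln\tfrac{3}{\eta})\ln\tfrac{3}{\eta}}+\frange\ln\tfrac{3}{\eta}\bigr)$ is the cleaner $\E_\sample[\era] \le \era + \sqrt{\frange\,\E_\sample[\era]\,\ln(3/\eta)/m}$, with no additive Bennett remainder; that is what the self-bounding left tail delivers and what a Bennett/Bousquet-form bound cannot. In short: right skeleton, correct Steps 1--2, but the crux step is mis-instantiated and would have to be replaced by the self-bounding argument (or, as the paper does, by a direct citation).
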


\begin{proof}[Proof of \cref{thm:supdevmcera}]
  Through \cref{lem:mcera} (using $\eta$ there equal to $\tfrac{\eta}{4}$),
  \cref{thm:supdev} (using $\eta$ there equal to $\tfrac{3\eta}{4}$), and an
  application of the union bound.
\end{proof}

\begin{proof}[Proof of \cref{thm:discrbounds}]
  It is immediate from the definitions of $\ndbound{f}{j}$ and $\dbound{f}{j}$
  in~\eqref{eq:bounds} that $\dbound{f}{j} \leq \ndbound{f}{j}$, so we can focus
  on $\dbound{f}{j}$. We start by showing that $\discr{f}{j} \leq
  \dbound{f}{j}$. It holds
  \begin{align*}
    &\discr{f}{j} =\\
    &\sum_{s_{i} \in \sample} \indp f^+(s_i) - \sum_{s_{i}
    \in \sample} \indm f^-(s_i) - \sum_{s_{i} \in \sample} \indm f^+(s_i)
    + \sum_{s_i \in \sample} \indp f^-(s_i) \\
    & \leq \sum_{s_{i} \in \sample} \indp f^+(s_i) - \sum_{s_{i}
    \in \sample} \indm f^-(s_i) = \dbound{f}{j}
  \end{align*}
  where the inequality comes from the fact that $\sum_{s_i \in \sample} \indm
    f^+(s_i) \ge 0$, and $\sum_{s_i \in \sample} \indp f^-(s_i) \le 0$.

  To prove that $\discr{g}{j} \le \dbound{f}{j}$ for every $g \in
  \desc(f)$ it is sufficient to show that $\dbound{g}{j} \leq \dbound{f}{j}$
  hold for every such $g$, since we just showed that $\discr{g}{j} \leq
  \dbound{g}{j}$ is true for any $f \in \F$. It holds $f \preceq g$, so from the
  definition of the relation $\preceq$ in~\eqref{eq:preceq}, we get
  \begin{align*}
    \dbound{g}{j} &= \sum_{s_{i} \in \sample} \indp g^+(s_i) - \sum_{s_{i} \in
    \sample} \indm g^-(s_i) \\
    &\le \sum_{s_{i} \in \sample} \indp f^+(s_i) - \sum_{s_{i} \in
    \sample} \indm f^-(s_i) = \dbound{f}{j}
    \end{align*}
  which completes our proof.
\end{proof}

\begin{proof}[Proof of \cref{lem:getNMCERA}]
  For $j \in \{1, \dotsc, n\}$, let $h_j$ be any of the functions attaining the
  supremum in $\sup_{f \in f} \discr{f}{j}$. 
  We need to show that the algorithm updates $\nu_j$ on
  line~\ref{algline:discr} of \cref{algo:main} using $\discr{h_j}{j}$ at some
  point during its execution. We focus on a single $j$, as the proof is the same
  for any value of $j$.

  It is evident from the description of the algorithm that $\nu_j$ is always
  only set to values of $\discr{g}{j}$, and since $h_j$ has the maximum of these
  values, $\nu_j$ will be, at any point in the execution of the algorithm less
  than or equal to $\discr{h_j}{j}$. Let's call this fact $\mathsf{F}_1$. Thus,
  if the algorithm ever hits line~\ref{algline:discr} with $f = h_j$, then we
  can be sure that the value stored in $\nu_j$ will be $\discr{h_j}{j}$, and
  this variable will never take an higher value. From fact $\mathsf{F}_1$ and
  \cref{thm:discrbounds} we also have that at any point in time it must be
  $\nu_j \le \dbound{h_j}{j} \le \ndbound{h_j}{}$, so the conditions on
  lines~\ref{algline:dbound} and~\ref{algline:candloop} are definitively
  satisfied, so the question is now whether $j \in \mathcal{J}[h_j]$ and whether
  there is an iteration of the \textbf{while} loop of
  line~\ref{algline:queueloop} for which $f = h_j$.

  It holds from \cref{thm:discrbounds} that it must be $\discr{h_j}{j} \le
  \dbound{g}{j} \le \ndbound{g}{}$ for every ancestor $g$ of $h_j$. From this
  fact and from fact $\mathsf{A}$ then it holds that at any point in time it
  must hold $\nu_j \dbound{g}{j} \le \ndbound{g}{}$ for every such ancestor $g$
  of $h_j$. Thus, the value $j$ is always added to the set $Y$ at every
  iteration of the \textbf{while} loop for which $f$ is an ancestor of $h_j$.
  Let's call this fact $\mathsf{F}_2$. Thus, as long as no ancestor of $h_j$ is
  pruned or $h_j$ itself is pruned, $j$ is guaranteed to be in
  $\mathcal{J}[h_j]$. But from fact $\mathsf{F}_2$ and from the fact that $j$
  belongs to $\mathcal{J}[f]$ for all the ancestors of $h_j$ that are in
  \Minimals{$f$} (line~\ref{algline:candsminimals}), then $j$ must belong to the
    set $N$ computed on line~\ref{algline:childCands} for all ancestors of
    $h_j$, thus $N$ is never empty and therefore no ancestor of $h_j$ is ever
    pruned and neither is $f$ and we are guaranteed that $h_j$ is added to $Q$
    on line~\ref{algline:push} when the first of its parents is visited. Thus,
    there is an iteration of the \textbf{while} loop that has $f=h_j$, and
    because of what we discussed above, then it will be the case that $\nu_j =
    \discr{h_j}{j}$ and our proof is complete.
\end{proof}

\begin{proof}[Proof of \cref{thm:supdev1mcera}]
  For ease of notation, let $\mathcal{G} = \F - \tfrac{\frange}{2}$. Consider
  the event
  \begin{equation}\label{eq:supdev1mcera-tech}
    \mathsf{E}_1 \doteq \sup_{g \in \mathcal{G}} \left( \EE_\sample[g] -
    \E_\probdist[g] \right) \le 2 \erade_{m}^1 (\mathcal{G}, \sample, \vsigma) +
    3 \frange \sqrt{\frac{\ln \frac{2}{\eta}}{2m}} \enspace.
  \end{equation}
  We now show that this event holds with probability at least $1-
  \tfrac{\eta}{2}$ over the choices of $\sample$ and $\vsigma$, and then we use
  this fact to obtain the thesis with some additional steps.

  Using linearity of expectation and the fact that the $n$-MCERA is an unbiased
  estimator for the ERA (i.e., its expectation is the ERA), we can rewrite the
  symmetrization inequality (\cref{thm:symmetrization}) as
  \[
    \E_{\sample, \vsigma}\left[ \sup_{g \in \mathcal{G}} \left( \EE_\sample[g] -
      \E_\probdist[g] \right) - 2\erade^{1}_{m}(\mathcal{G}, \sample, \vsigma)
    \right] \leq 0 \enspace.
  \]
  The argument of the (outmost) expectation on the l.h.s.~can be seen as a
  function $h$ of the $m$ pairs of r.v.'s $(\vsigma_{1,1}, s_1), \dotsc,
  (\vsigma_{1,m}, s_m)$. Fix any possible assignment $v'$ of values to these
  pairs. Consider now a second assignment $v''$ obtained from $v'$  by changing
  the value of \emph{any of the pairs} with any other value in $\{-1, 1\} \times
  \X$. We want to show that it holds $|h(v') - h(v'') | \le
  3\tfrac{\frange}{m}$.

  We separately handle the SD and the $1$-MCERA, as both depend on the values of
  the assignment of values to the pairs. The SD does not depend on
  $\vsigma_{1,\cdot}$, and in the argument of the supremum, changing any
  $s_{j}$ changes a single summand of the empirical mean $\EE_\sample[f]$, with
  maximal change when $f(s_{j})$ changes from $a$ to $b$ (or viceversa), thus
  the SD itself changes by no more than $\tfrac{c}{m}$.

  We now consider the $1$-MCERA, and assume that the pair changing value is
  $(\vsigma_{1,j}, s_j)$. Then the only term of the 1-MCERA sum that changes is
  the $j$-th term. If only the first component of the pair changes value (i.e.,
  $\vsigma_{1,j}$ changes from $1$ to $-1$ or viceversa, i.e., from
  $\vsigma_{1,j}$ to $-\vsigma_{1,j}$), then the $j$-th term in the 1-MCERA sum
  cannot change by more than $\frange$, because it holds $\vsigma_{1,j} g(s_j)
  \in [-\tfrac{\frange}{2}, \tfrac{\frange}{2}]$, thus $-\vsigma_{1,j} g(s_j)$
  also belongs to this interval, and it must be $|\vsigma_{1,j} g(s_j) -
  (-\vsigma_{1,j} g(s_j))| \le \frange$.  If only the second component of the
  pair changes value (i.e., $s_j$ changes value to $\bar{s}_j$), then the $j$-th
  term in the 1-MCERA sum cannot change by more than $\frange$, because each
  function $g \in \mathcal{G}$ goes from $\X$ to $[-\tfrac{\frange}{2},
  \tfrac{\frange}{2}]$, and it must be $|\vsigma_{1,j} g(s_j) - \vsigma_{i,j}
  g(\bar{s}_j)| \le \frange$. Consider now the final case where both
  $\vsigma_{1,j}$ and $s_j$ change value.  We have once again $|\vsigma_{1,j}
  g(s_j) - (-\vsigma_{1,j} g(\bar{s}_j))| \le \frange$.

  By the adding the maximum change in the SD and the maximum change in the
  1-MCERA we can conclude that function $h$ satisfies the requirements of
  McDiarmid's inequality (\cref{thm:mcdiarmid}) with constants
  $3\tfrac{\frange}{m}$, and obtain that event $\mathsf{E}_1$
  from~\eqref{eq:supdev1mcera-tech} holds with probability at least $1 -
  \tfrac{\eta}{2}$.

  Let now $-\mathcal{G}$ represent the family of functions containing $-g$ for
  each $g \in \mathcal{G}$. Consider the event
  \[
    \mathsf{E}_2 \doteq \sup_{g \in -\mathcal{G}} \left( \EE_\sample[g] -
    \E_\probdist[g] \right) \le 2 \erade_{m}^1 (-\mathcal{G}, \sample, -\vsigma)
    + 3 \frange \sqrt{\frac{\ln \frac{2}{\eta}}{2m}} \enspace.
  \]
  Following the same steps as for $\mathsf{E}_1$, we have that $\mathsf{E}_2$
  holds with probability at least $1 - \tfrac{\eta}{2}$, as the fact that we are
  considering $\erade_{m}^1 (-\mathcal{G}, \sample, -\vsigma)$ rather than
  $\erade_{m}^1 (-\mathcal{G}, \sample, \vsigma)$ is not influential.

  It is easy to see that  $\erade_{m}^1 (-\mathcal{G}, \sample, -\vsigma) =
  \erade_{m}^1 (\mathcal{G}, \sample, \vsigma)$, and that
  \[
    \sup_{g \in -\mathcal{G}} \left( \EE_\sample[g] - \E_\probdist[g] \right) =
    \sup_{g \in \mathcal{G}} \left( \E_\probdist[g] - \EE_\sample[g] \right)
    \enspace.
  \]
  Thus we can rewrite $\mathsf{E}_2$ as
  \[
    \mathsf{E}_2 = \sup_{g \in \mathcal{G}} \left( \E_\probdist[g] -
    \EE_\sample[g] \right) \le 2 \erade_{m}^1 (\mathcal{G}, \sample, \vsigma)
    + 2 \frange \sqrt{\frac{\ln \frac{2}{\eta}}{2m}} \enspace.
  \]
  From the union bound, we have that $\mathsf{E}_1$ and $\mathsf{E}_2$ hold
  simultaneously with probability at least $1- \eta$, i.e., the following event
  holds with probability at least $1-\eta$
  \[
    \sd(\mathcal{G}, \sample, \probdist) \le 2 \erade_{m}^1 (\mathcal{G},
    \sample, \vsigma) + 3 \frange \sqrt{\frac{\ln \frac{2}{\eta}}{2m}} \enspace.
  \]
  The thesis then follows from the fact $\supdev = \sd(\mathcal{G}, \sample,
  \probdist)$.
\end{proof}

\else 
Additional missing proofs are in the appendix of the extended online version at
\refmissing.
\fi 

\begin{proof}[Proof of \cref{thm:correctness}]
  For ease of notation, let $\mathcal{G} = \negborder(\tfp(\theta, \lang))$.
  Let $\rho$, $r$, and $\varepsilon$ be as in \cref{thm:supdevvar} for $\eta =
  \delta$ and $\F = \mathcal{G}$. \cref{thm:supdevvar} tells us that, with
  probability at least $1 - \delta$, it holds $\sd(\mathcal{G}, \sample) \le
  \varepsilon$.\footnote{We actually only need a value $\varepsilon$ such that
    $\sup_{f \in \mathcal{G}} \left( \EE_\sample[f] - \E_\probdist[f] \right) <
  \varepsilon$, but the gain would be minimal and it would make the
  presentation more complicated.} Assume from now on that that is the case.

  We use this fact to show inductively that, at the end of every iteration of
  the loop of \algotfp\ (lines~\ref{algline:startloop}--\ref{algline:endloop} of
  \cref{algo:tfp}), it  holds that $\mathcal{G} \subseteq \tempset$ and $Y
  \subseteq \tfp(\theta, \lang)$, and therefore the thesis will hold.

  Consider the first iteration of the loop. We have $\tempset = \lang \supseteq
  \mathcal{G}$. Let $\hat{\rho}$, $\hat{r}$, and $\hat{\varepsilon}$ be the
  values computed inside the call to the function \GetSupDevBoundVar\ on
  line~\ref{algline:eps} with the parameters mentioned in the description of the
  algorithm. It holds that $\hat{\rho} \ge \rho$, because the $n$-MCERA of a
  superset of a family is not smaller than the $n$-MCERA of the family. It
  follows that $\hat{r} \ge r$, which in turn implies that $\hat{\varepsilon}
  \ge \varepsilon$. Since we assumed that $\sd(\mathcal{G}, \sample) \le
  \varepsilon$, we have $\hat{\varepsilon} \ge \varepsilon \ge \sd(\mathcal{G},
  \sample)$
  No function $f \in \mathcal{G}$ may then have sample mean
  $\EE_\sample[f]$ greater than or equal to $\theta + \hat{\varepsilon}$, as
  every such $f$ has $\E_\probdist[f] < \theta$. Call this fact $\mathsf{A}$. A
  first consequence of $\mathsf{A}$ is that, at the end of the iteration, it
  holds $\mathcal{G} \subseteq \tempset$. A second consequence of $\mathsf{A}$
  and of the antimonotonicity property is that \emph{none} of the functions $f
  \in \lang$ such that $\E_\probdist[f] < \theta$ may have $\EE_\sample[f] \ge
  \theta + \hat{\varepsilon}$. Equivalently, only functions $f \in \lang$ such
  that $\E_\probdist[f] \ge \theta$, i.e., such that $f \in \tfp(\theta,
  \lang)$, may have $\EE_\sample[f] \ge \theta + \hat{\varepsilon}$, i.e.,
  $\tempset' \setminus \tempset \subseteq \tfp(\theta, \lang)$, so $Y \subseteq
  \tfp(\theta, \lang)$ at the end of the first iteration. The base case is
  complete.

  Assume now that $\mathcal{G} \subseteq \tempset$ and $Y \subseteq \tfp(\theta,
  \lang)$ at the end of all iterations from $1$ to $i$. Following the same
  reasoning as for the base case, it holds that these facts are true also at the
  end of iteration $i+1$ and our proof is complete.
\end{proof}

\begin{table}[tbh]
   \centering
\iflongversion%
\small{%
\else
\scriptsize{%
\fi
   \begin{tabular}{lrrccc}
      \toprule
      dataset & $|\D|$ & $|I|$ & avg.~trans.~len. \\
      \midrule
      svmguide3 & 1,243 & 44 & 21.9 \\
      chess  & 3,196 & 75 & 37 \\
      breast cancer  & 7,325 & 396 & 11.7 \\
      mushroom  & 8,124 & 117 & 22   \\
      phishing & 11,055 & 137 & 30 \\
      a9a & 32,561 & 245 & 13.9 \\
      pumb-star& 49,046 & 7,117 & 50.9 \\
      bms-web1 & 58,136 & 60,878 & 3.51 \\
      connect& 67,557 & 129 & 43.5 \\
      bms-web2& 77,158 & 330,285 & 5.6 \\
      retail & 87,979 & 16,470 & 10.8 \\
      ijcnn1  & 91,701 & 43 & 13 \\
      T10I4D100K & 100,000 & 1,000 & 10 \\
      T40I10D100K& 100,000 & 1,000 & 40 \\
      accidents & 340,183 & 468 & 34.9 \\
      bms-pos& 515,420 & 1,657 & 6.9 \\
      covtype & 581,012 & 108 & 12.9 \\
      susy & 5,000,000 & 190 & 19 \\
      \bottomrule
   \end{tabular}
  }
\caption{Datasets statistics. For each dataset, we report the number $|\D|$ of
transactions; the number $| \Itms |$ of items; the average transaction
length.}\label{tab:data}
\Description[Table 1: Datasets statistics.]{Table 1: Datasets statistics.}
\end{table}

\begin{figure}[tbh]
  \includegraphics[width=0.35\textwidth]{./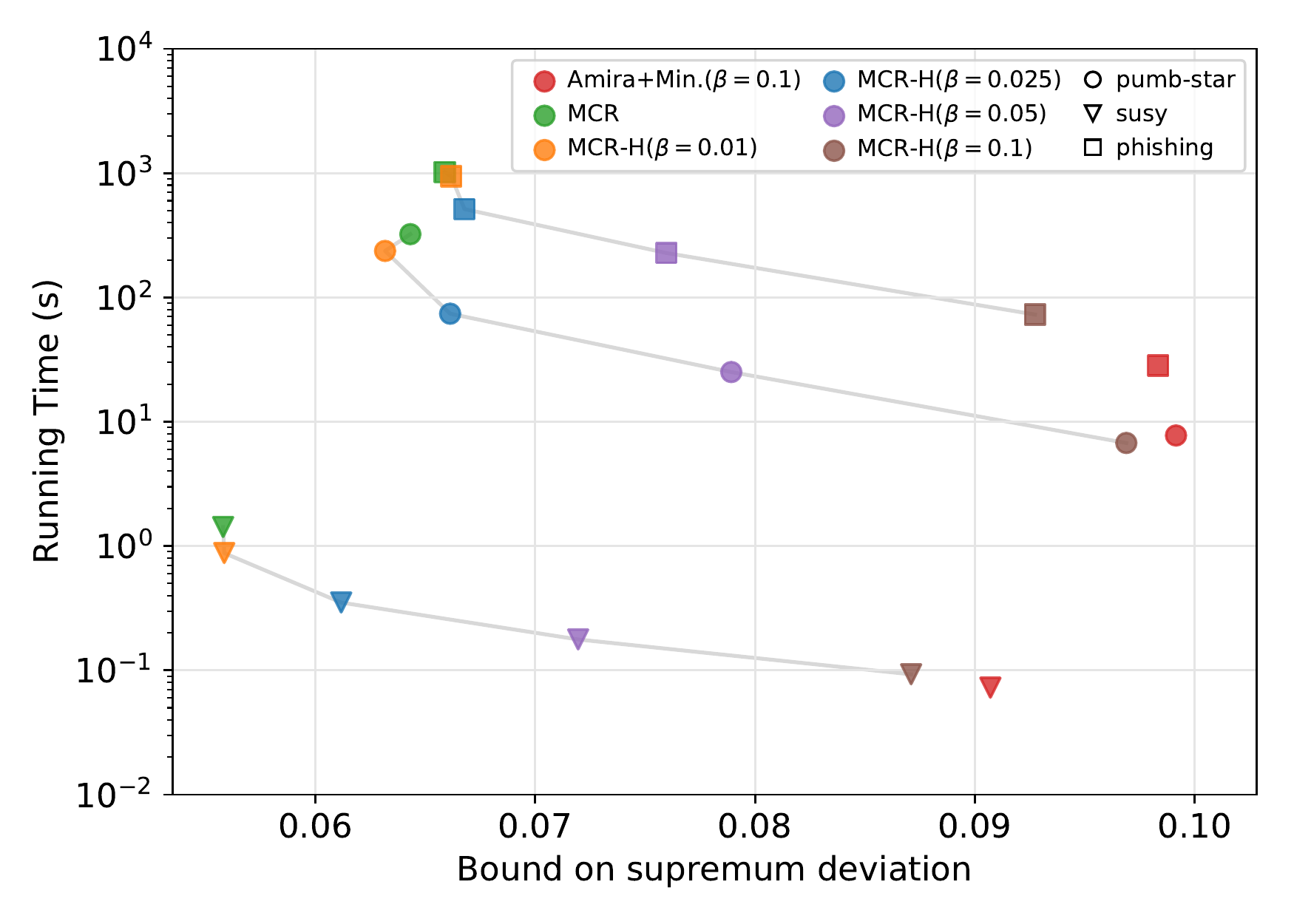}
  \caption{Running times of \algo, \algoh\ and \amira\ vs corresponding upper
  bound on supremum deviation of the entire set of functions $\F$. For \algoh\
  we use different values of $\beta$. $y$-axis in log scale but $x$ axis is
  linear. Each marker shape corresponds to one of the
datasets.}\label{fig:runningtimes_pareto}
\Description[Figure 3. MCRapper-H allows to trade-off between the bound on the SD and running time.]{Figure 3: The running time of MCRapper can be reduced using MCRapper-H, at the cost of computing an higher bound on the supremum deviation.}
\end{figure}

\vspace{-15pt}
\subsection{Reproducibility}\label{sec:repro}
We now describe how to reproduce our experimental results. Code and data are
available at \repolink.

The code of \algo, \algotfp, and \amira\ are in the sub-folders
\texttt{mcrapper/} and \texttt{amira/}. To compile with recent GCC or Clang, use
the \texttt{make} command inside each sub-folder.

The convenient scripts \texttt{run\_amira.py} and \texttt{run\_mcrapper.py} can
be used to run the experiments (i.e., run \amira, \algo, and
\algotfp). They accept many input parameters (described using the flag
\texttt{-h}). You need to specify a dataset and the size of a random sample to
create using the flags \texttt{-db} and \texttt{-sz}. E.g., to process a random
sample of $10^3$ transactions from the dataset \texttt{mushroom} with $n=100$,
run
\begin{center}
  \begin{verbatim}
    run_mcrapper.py -db mushroom -sz 1000 -j 100
  \end{verbatim}
\end{center}
\vspace{-12pt}
and it automatically executes both \amira\ and \algo. The command line to
process with \algotfp\ a sample of $10^4$ transactions from the
dataset \texttt{retail} with $n=10$ and $\theta = 0.05$ is
\begin{center}
  \begin{verbatim}
    run_mcrapper.py -db retail -sz 10000 -j 10 -tfp 0.05
  \end{verbatim}
\end{center}
\vspace{-12pt}
The \texttt{run\_all\_datasets.py} script runs all the instances of \algo\ and
\amira\ in parallel, and can be used to reproduce all the experiments described
in \cref{sec:experiments}. The \texttt{run\_tfp\_all\_datasets.py} script
reproduces the experiments for \algotfp\ and \algotfpamira.

All the results are stored in the files
\texttt{results\_mcrapper.csv} and \texttt{results\_tfp\_mcrapper.csv}.


\iflongversion%
\fi

\end{document}